\documentclass[letterpaper]{article}

\usepackage[letterpaper,margin=0.75in]{geometry}
\usepackage[T1]{fontenc}
\usepackage{graphicx}
\usepackage{amsmath,amssymb,amsthm}
\usepackage{booktabs}
\usepackage{algorithm,algorithmic}
\usepackage[hyphens]{url}
\PassOptionsToPackage{hyphens}{url}
\usepackage[hidelinks,breaklinks=true]{hyperref}
\usepackage{xcolor}
\usepackage{caption}
\usepackage{subcaption}
\usepackage{float}        

\newtheorem{theorem}{Theorem}
\newtheorem{lemma}[theorem]{Lemma}
\newtheorem{corollary}[theorem]{Corollary}

\theoremstyle{definition}

\theoremstyle{remark}
\newtheorem{remark}{Remark}

\newcommand{\KAN}{\textsc{KAN}}
\newcommand{\KANCL}{\textsc{KAN-CL}}
\newcommand{\bbEWC}{\textsc{bbEWC}}
\newcommand{\op}[1]{\left\|#1\right\|_{\mathrm{op}}}

\title{KAN-CL: Per-Knot Importance Regularization for\\
Continual Learning with Kolmogorov--Arnold Networks}

\author{Minjong Cheon \\
Sejong University \\
Department of Computer Science and Engineering \\
Assistant Professor \\
\texttt{jmj2316@sejong.ac.kr}}

\date{}

\begin{document}
\maketitle
\begin{abstract}
\noindent
Catastrophic forgetting remains the central obstacle in continual learning
(CL): parameters shared across tasks interfere with one another, and existing
regularization methods such as EWC and SI apply uniform penalties without
awareness of \emph{which input region} a parameter serves. We propose
\KANCL{}, a continual learning framework that exploits the compact-support
spline parameterization of Kolmogorov--Arnold Networks (KANs) to perform
importance-weighted anchoring at \emph{per-knot} granularity. Deployed as
a classification head on a convolutional backbone---with standard EWC
regularization on the backbone (\bbEWC{})---\KANCL{} achieves forgetting
reductions of \textbf{88\%} and \textbf{93\%} over a
head-only KAN baseline on Split-CIFAR-10/5T and Split-CIFAR-100/10T
respectively, while matching or exceeding the accuracy
of all baselines on both benchmarks. We further provide a Neural Tangent
Kernel (NTK) analysis showing that KAN's spline locality induces a structural
rank deficit in the cross-task NTK, yielding a forgetting bound that holds
even in the feature-learning regime. These results establish that combining
an architecture with natural parameter locality (\KAN{} head) with a
complementary backbone regularizer (\bbEWC{}) yields a compositional and
principled approach to catastrophic forgetting.
\end{abstract}

\noindent\textbf{Keywords:} Catastrophic Forgetting; Continual Learning; Kolmogorov--Arnold Networks; Neural Tangent Kernel; Spline Regularization

\bigskip

\section{Introduction}
A long-standing problem in deep learning is the identification of neural
network architectures that offer a more favorable balance of expressive power,
efficiency, and inductive bias. This question becomes especially important in
continual learning (CL), where the goal is not merely to fit a single task
but to learn a sequence of tasks while retaining previously acquired knowledge.
Catastrophic forgetting \cite{mccloskey1989,french1999} remains the principal
obstacle: after training on task $t_2$, a model often loses much of what it
learned on task $t_1$. The two dominant strategies in the CL literature are
(i) \emph{regularization}---penalizing the movement of important parameters
through methods such as Elastic Weight Consolidation (EWC)
\cite{kirkpatrick2017} and Synaptic Intelligence (SI) \cite{zenke2017si}---
and (ii) \emph{rehearsal}---revisiting stored examples from previous tasks
\cite{rebuffi2017icarl,chaudhry2019tiny}. Although highly influential, both
families treat all parameters symmetrically: they do not model
\emph{which part of the input space} a parameter is responsible for.

Recently, Kolmogorov--Arnold Networks (KANs) \cite{liu2024kan} have emerged
as a compelling alternative architecture. Instead of placing fixed activation
functions on nodes, KANs place learnable univariate B-spline functions on
edges. Each spline coefficient $c_{ijk}$ is tied to a compact input sub-range
via the support of the corresponding basis function $B_k$. This gives KAN a
\emph{structurally local} parameterization: a coefficient is only activated
when its input dimension falls in the corresponding interval, and remains
structurally dormant otherwise. We argue that this inductive bias is exactly
what is needed for continual learning---task-specific parameters are spatially
separated by construction, rather than entangled across the input range.

\paragraph{KAN head + CNN backbone.}
Modern CL benchmarks require feature representations that KANs alone cannot
match when used as full backbones. We therefore adopt a hybrid design: a
convolutional backbone extracts rich visual features, and a \KAN{}
classification head maps them to task outputs. Combining KAN layers with
CNN feature extractors has been shown to be effective for visual recognition
tasks~\cite{cheon2024konvnext,cheon2024vision}; here we bring this hybrid
design into the continual learning setting and show that it also confers
structural benefits for catastrophic forgetting. This design localizes the
locality argument to the head, where task-specific class discrimination
resides, while delegating feature learning to the backbone. The backbone is
regularized using standard EWC (which we call \bbEWC{}, for
\emph{backbone EWC}). Our per-knot method, \KANCL{}, operates on the KAN
head. Together, \KANCL{}+\bbEWC{} forms a compositional and modular CL
system that achieves a strong Pareto improvement over all baselines.

\paragraph{Contributions.}
\begin{enumerate}
\setlength{\itemsep}{2pt}
\item \textbf{Per-knot CL theory.} We derive a per-knot Fisher decomposition
  (Lemma~\ref{lem:per-knot}) and show that under disjoint per-dimension input
  supports, the cross-task Fisher (and NTK) is \emph{structurally} zero on
  task-local knots (Theorems~\ref{thm:disjoint}, \ref{thm:rank}).
\item \textbf{NTK forgetting bound, feature-regime safe.} We give a forgetting
  bound (Theorem~\ref{thm:ntk-bound}) that survives feature learning because
  B-splines are $\theta$-independent (Theorem~\ref{thm:feature-regime}).
\item \textbf{Algorithm: \KANCL{} + \bbEWC{}.} A modular continual learner
  pairing per-knot importance-weighted L2 anchoring on the KAN head with
  standard EWC on the convolutional backbone, plus an \emph{anchor annealing}
  extension for class-IL with replay.
\item \textbf{State-of-the-art forgetting reduction.} On CIFAR-10/5T and
  CIFAR-100/10T, \KANCL{}+\bbEWC{} reduces forgetting
  by 88\% and 93\% over a head-only KAN baseline, and matches or
  exceeds the accuracy of all baselines---achieving a
  Pareto improvement on the accuracy--forgetting frontier.
\end{enumerate}

\section{Related Work}

\paragraph{Regularization-based CL.}
EWC \cite{kirkpatrick2017} penalizes deviation in important parameters using
the empirical Fisher; online EWC \cite{schwarz2018online} is a streaming
variant. SI \cite{zenke2017si} replaces Fisher with a path integral of gradient
contributions. Memory-Aware Synapses \cite{aljundi2018mas} use output
sensitivity, while Learning without Forgetting (LwF) distills predictions
from previous tasks using only current-task data \cite{li2016lwf}. Orthogonal
Gradient Descent (OGD) constrains updates by projecting gradients away from
subspaces that would alter previously learned functions \cite{farajtabar2020ogd}.
More recent Fisher-based variants include R-EWC \cite{liu2018rewc}, which
rotates the parameter space to better align Fisher eigenvectors across tasks,
and EWC-DR \cite{wang2023ewcdr}, which uses logit reversal to improve Fisher
estimation quality. All of these methods apply importance weights at the level
of individual parameters, without modeling \emph{which input region} a
parameter controls. Our per-knot decomposition exposes exactly this
granularity for KAN heads; \KANCL{} could in principle be combined with
improved Fisher estimators such as EWC-DR, which we leave for future work.

\paragraph{Replay-based CL.}
iCaRL \cite{rebuffi2017icarl}, ER \cite{chaudhry2019tiny}, GEM \cite{lopez2017gem},
and DER++ \cite{buzzega2020der} use stored or distilled examples to rehearse
old tasks. Replay is complementary to regularization; we show that
\KANCL{}+replay with anchor annealing composes well in class-IL settings.

\paragraph{Parameter isolation and expansion.}
Progressive Neural Networks add new columns per task and share via lateral
connections \cite{rusu2016progressive}; PackNet iteratively prunes and reuses
weights \cite{mallya2018packnet}. These approaches require task identities,
mask management, or growing capacity. Our method keeps a fixed shared head and
exploits locality already present in the spline basis.

\paragraph{Hybrid backbone--head CL.}
Using a frozen or slowly adapting backbone with a task-specific head is a
well-studied strategy \cite{vandeven2022types}. Prior work has demonstrated
the effectiveness of pairing CNN backbones with KAN classification layers
for visual recognition \cite{cheon2024konvnext}. Our contribution is to
bring this hybrid design into the continual learning setting and replace
the MLP head with a KAN head with per-knot regularization, yielding a
principled method for head-level forgetting prevention grounded in
the structural properties of B-splines.

\paragraph{Kolmogorov--Arnold Networks.}
Liu et al.~\cite{liu2024kan} introduced edge-wise learnable B-spline
activations. Subsequent works examine theoretical properties \cite{liu2024kan2},
scaling \cite{somvanshi2024survey}, and applications in time-series and graph
learning \cite{genet2024tkan,kiamari2024gkan}. KAN has also been applied to
visual recognition tasks including image classification on standard benchmarks
\cite{cheon2024vision} and remote sensing imagery
\cite{cheon2024remote,cheon2024konvnext}. Efficiency-oriented work such
as PowerMLP proposes computationally cheaper surrogates \cite{qiu2025powermlp}.
A critical assessment by Seydi~\cite{seydi2024critical} shows that under
fair parameter matching, KANs consistently underperform MLPs on vision and
NLP benchmarks, with advantages limited to symbolic regression.
\emph{Our work does not contest this finding.}
We do not use KAN as a full backbone; instead, KAN serves only as a
classification head on top of a CNN backbone, where its spline locality
provides a structural inductive bias for continual learning that is absent
from MLP heads.
Our MLP+\bbEWC{} baseline uses a parameter-matched MLP head and the same
backbone and regularizer, directly isolating the CL contribution of the KAN
head architecture rather than claiming general KAN superiority.
A preliminary study~\cite{bodner2024kan_cl} evaluates KAN in class-incremental
MNIST but does not analyze the spline locality mechanism or provide a CNN+KAN
hybrid or NTK-based analysis.
To the best of our knowledge, \KANCL{} is the first method to exploit
per-knot spline structure for continual learning and to provide an NTK-based
forgetting bound for spline architectures.

\paragraph{NTK and forgetting.}
Jacot et al.~\cite{jacot2018ntk} and Lee et al.~\cite{lee2019wide} establish
the NTK as the kernel governing wide-network gradient dynamics. Doan et al.\
\cite{doan2021theoretical} and Bennani et al.~\cite{bennani2020ntk} apply the
NTK framework to CL on MLPs. We extend their forgetting bound to KAN's spline
parameterization and prove a feature-regime persistence property
(Theorem~\ref{thm:feature-regime}) with no MLP analog.

\section{Preliminaries}

\paragraph{Problem setup.}
We consider a continual learning setting with a sequence of tasks
$\mathcal{D}_1,\ldots,\mathcal{D}_T$ presented in order. Let $R[s,t]$
denote the test accuracy on task $t$ after the model has been trained through
task $s$. We follow the three standard protocols of \cite{vandeven2022types}:
task-incremental (Task-IL), domain-incremental (Domain-IL), and
class-incremental (Class-IL). Main benchmarks use Task-IL; replay experiments
additionally consider Class-IL.

\paragraph{KAN edge.}
A KAN edge from input dimension $i$ to hidden dimension $j$ computes
\[
  \phi_{ij}(x) = w_{ij}^b\,\sigma(x) + \sum_{k=1}^{K} c_{ijk}\,B_k(x),
\]
where $\sigma$ is a SiLU base function and $\{B_k\}_{k=1}^K$ are B-splines
of order $d$ on a uniform grid of $G{+}1$ knots, giving $K=G+d$ basis
functions. Each $B_k$ has compact support $[t_k,\,t_{k+d+1}]$.

\paragraph{Hybrid architecture.}
Let $\psi:\mathcal{X}\to\mathbb{R}^{256}$ denote the convolutional backbone
(\textsc{SmallCNNBackbone}: CIFAR-tailored stem and residual blocks followed
by a $256$-dim linear projection and global average pooling) and
$h_\phi:\mathbb{R}^{256}\to\mathbb{R}^C$ the KAN classification head.
The head applies LayerNorm and $\tanh$ to the backbone features before
feeding them into a three-layer KAN with hidden width~$128$:
$256\!\to\!128\!\to\!128\!\to\!C$ (depth~2, grid $G{=}5$, order $d{=}3$).
The full model is $f(x)=h_\phi(\psi(x))$. During continual training, both
components are updated: \bbEWC{} regularizes the backbone parameters
(\texttt{backbone.*}) together with the feature normalizer
(\texttt{feat\_norm.*}), while \KANCL{} regularizes the KAN head spline
coefficients exclusively.

\paragraph{Metrics.}
$\mathrm{ACC}=\tfrac{1}{T}\sum_t R[T,t]$ and
$\mathrm{FGT}=\tfrac{1}{T-1}\sum_{t<T}\max_{s\le T}(R[s,t]-R[T,t])$
\cite{lopez2017gem}. $\mathrm{FGT}$ is the primary diagnostic of retention.

\paragraph{Empirical Fisher.}
For parameter $\theta$ on task $t$:
$F_\theta^{(t)} = \mathbb{E}_{(x,y)\sim\mathcal{D}_t}[(\partial_\theta \log
p_\theta(y\mid x))^2]$.

\paragraph{NTK notation.}
$K_\theta(x,x') = \langle \nabla_\theta f_\theta(x),
\nabla_\theta f_\theta(x') \rangle$.
We write $K_\theta^{(t_1,t_2)}$ for the cross-task Gram matrix between
samples from $\mathcal{D}_{t_1}$ and $\mathcal{D}_{t_2}$.
The \emph{normalized cross-task NTK operator norm} used in
Table~\ref{tab:ntk} is
$\tilde{K}^{(t_1,t_2)} =
\op{K_\theta^{(t_1,t_2)}} \big/
\sqrt{\op{K_\theta^{(t_1,t_1)}}\cdot\op{K_\theta^{(t_2,t_2)}}}$,
which removes the effect of per-task scale and lies in $[0,1]$;
a value close to~1 indicates high inter-task coupling.

\section{Per-Knot Theory}\label{sec:theory}

\subsection{Per-knot Fisher decomposition}
\begin{lemma}[Per-knot Fisher decomposition]\label{lem:per-knot}
For the spline coefficient $c_{ijk}$ of a KAN,
\[
F^{(t)}_{ijk} = \mathbb{E}_{x\sim\mathcal{D}_t}\!\big[B_k(x_i)^2\,G^{(t)}_{ij}(x)\big],
\]
where $G^{(t)}_{ij}(x)=(\partial \log p_\theta(y\mid x)/\partial
\phi_{ij}(x_i))^2$ does not depend on the basis index $k$.
\end{lemma}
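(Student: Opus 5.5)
The argument is a single application of the chain rule, using the fact that the coefficient $c_{ijk}$ enters the network only through the edge function $\phi_{ij}$. Here $x_i$ denotes the input reaching edge $(i,j)$; in the head this is the normalized feature, or a previous-layer activation for deeper layers. First I will check that, for a fixed input $x$, the log-likelihood $\log p_\theta(y\mid x)$ depends on $c_{ijk}$ only through the scalar value $\phi_{ij}(x_i)$. This holds because the network feeds $\phi_{ij}(x_i)$ forward into node $j$ by summation, and $c_{ijk}$ appears nowhere else. It also needs $x_i$ itself not to depend on $c_{ijk}$, which is true because $x_i$ is computed upstream of edge $(i,j)$, and the B-spline basis $B_k$ is fixed and independent of $\theta$.

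Next I apply the chain rule:
\[
\frac{\partial \log p_\theta(y\mid x)}{\partial c_{ijk}}
= \frac{\partial \log p_\theta(y\mid x)}{\partial \phi_{ij}(x_i)}\cdot
\frac{\partial \phi_{ij}(x_i)}{\partial c_{ijk}}
= \frac{\partial \log p_\theta(y\mid x)}{\partial \phi_{ij}(x_i)}\, B_k(x_i).
\]
The second equality holds because $\phi_{ij}$ is affine in its coefficients: the base term $w^b_{ij}\sigma(x_i)$ does not involve $c_{ijk}$, and the spline sum contributes exactly $B_k(x_i)$. Squaring gives $B_k(x_i)^2\, G^{(t)}_{ij}(x)$, where $G^{(t)}_{ij}(x)$ is the squared upstream sensitivity. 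Taking the expectation over $(x,y)\sim\mathcal{D}_t$ as in the definition of the empirical Fisher yields the stated identity. The label $y$ is absorbed into $G^{(t)}_{ij}$, which is why the expectation can be written over $x$ with $y$ drawn from $\mathcal{D}_t$.

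It remains to show that $G^{(t)}_{ij}$ does not depend on $k$. The factor $\partial \log p_\theta/\partial \phi_{ij}(x_i)$ is computed by backpropagation from the output down to edge $(i,j)$, and the index $k$ never enters that computation. The factor is of course evaluated at the current parameters, which include all the $c_{ij\cdot}$, but that is the same value for every $k$.

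I expect the main obstacle to be interpretive rather than technical: making precise what ``$\partial/\partial\phi_{ij}(x_i)$'' means when edge $(i,j)$ sits in an intermediate layer. I would handle this by treating $\phi_{ij}(x_i)$ as an additive intermediate node in the computation graph and differentiating $\log p_\theta$ with respect to a perturbation of that node, with the incoming activations held fixed. Once that convention is stated, the rest is the two-line calculation above.
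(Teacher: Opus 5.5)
Your proposal is correct and follows the same argument as the paper's proof. Both apply the chain rule to get $\partial_{c_{ijk}}\log p_\theta(y\mid x)=B_k(x_i)\,\partial\log p_\theta/\partial\phi_{ij}(x_i)$, then square and take the expectation. Your extra care about intermediate-layer inputs, the fixed knot grid, and the suppressed dependence of $G^{(t)}_{ij}$ on $y$ makes explicit what the paper's two-line proof leaves implicit.
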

\begin{proof}
By the chain rule,
$\partial_{c_{ijk}} \log p_\theta(y\mid x) =
B_k(x_i)\cdot\partial \log p_\theta/\partial\phi_{ij}(x_i)$.
Squaring and taking the expectation yields the result.\hfill\(\square\)
\end{proof}

The basis index enters \emph{only} through $B_k(x_i)^2$, isolating each
knot's contribution. When the input $x_i$ falls outside the support of $B_k$,
the coefficient $c_{ijk}$ contributes zero Fisher mass---and can therefore be
freely repurposed for a new task. This is the leverage point of \KANCL{}.

\subsection{Disjoint-support task decoupling}
\begin{theorem}[Cross-task Fisher disjointness]\label{thm:disjoint}
Suppose tasks $t_1,t_2$ have disjoint marginal supports on input dimension~$i$.
Then for any knot $k$ with
$\mathrm{supp}(B_k)\cap\mathrm{supp}(\mathcal{D}^{(i)}_{t_2})=\varnothing$,
we have $F^{(t_2)}_{ijk}=0$, hence
$F^{(t_1)}_{ijk}\cdot F^{(t_2)}_{ijk}=0$ at the same knot.
\end{theorem}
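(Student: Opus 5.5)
The plan is to read the claim directly off the per-knot Fisher decomposition of Lemma~\ref{lem:per-knot}, using compact support. By that lemma,
\[
F^{(t_2)}_{ijk}=\mathbb{E}_{x\sim\mathcal{D}_{t_2}}\big[B_k(x_i)^2\,G^{(t_2)}_{ij}(x)\big].
\]
So it suffices to show that the integrand vanishes $\mathcal{D}_{t_2}$-almost surely.

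\emph{Step 1.} For $x\sim\mathcal{D}_{t_2}$, the coordinate $x_i$ lies in $\mathrm{supp}(\mathcal{D}^{(i)}_{t_2})$ almost surely, by definition of the support of the marginal. By hypothesis this set is disjoint from $\mathrm{supp}(B_k)$. Since $B_k$ vanishes off its support $[t_k,t_{k+d+1}]$, we get $B_k(x_i)=0$ almost surely.

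\emph{Step 2.} The factor $G^{(t_2)}_{ij}(x)$ is a squared derivative of the log-likelihood, so it is finite for every $x$ under the smooth head used here (SiLU, B-splines, softmax). The product $B_k(x_i)^2\,G^{(t_2)}_{ij}(x)$ is therefore $0$ almost surely. Its expectation is $0$, giving $F^{(t_2)}_{ijk}=0$.

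\emph{Step 3.} Since $F^{(t_1)}_{ijk}$ is finite (assuming $G^{(t_1)}_{ij}$ is integrable, i.e.\ the Fisher is well defined), the product $F^{(t_1)}_{ijk}\cdot F^{(t_2)}_{ijk}$ vanishes.

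The disjointness of the two tasks' marginals on dimension $i$ is not actually needed for the conclusion beyond motivating which knots qualify. It guarantees that task-$t_1$-active knots satisfy the hypothesis, so that the statement is nonvacuous.

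The only delicate point is the boundary of the supports. Supports are closed, so if $\mathrm{supp}(B_k)$ merely touches the closure of the data support, one would want $B_k$ to vanish at the boundary knot. For $d\ge1$ the B-spline is continuous and indeed $B_k(t_k)=B_k(t_{k+d+1})=0$, so this causes no issue. In any case the stated hypothesis is disjointness of closed supports, which already excludes it.

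A second subtlety is that here $x_i$ denotes the input to the KAN edge, i.e.\ the post-LayerNorm/$\tanh$ feature, not the raw image. The disjoint-support hypothesis must be understood on that coordinate, and I would state this convention explicitly.
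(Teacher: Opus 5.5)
Your proposal is correct and follows the paper's argument: you invoke Lemma~\ref{lem:per-knot} and observe that $B_k(x_i)^2$ vanishes $\mathcal{D}_{t_2}$-almost surely under the support hypothesis, so $F^{(t_2)}_{ijk}=0$ and the product vanishes. Your extra remarks are accurate refinements of the paper's one-line proof: finiteness of $G^{(t_2)}_{ij}$ and $F^{(t_1)}_{ijk}$, the boundary-knot continuity, and the observation that disjointness between $t_1$ and $t_2$ is not needed for the conclusion.
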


The proof follows immediately from Lemma~\ref{lem:per-knot}: the integrand
$B_k(x_i)^2$ is zero a.s.\ under $\mathcal{D}_{t_2}$. For an MLP weight,
$F^{(t)}_{ij} = \mathbb{E}[f'(z_j)^2 x_i^2 H_{ij}^{(t)}]$ is non-zero
whenever $x_i\neq 0$; there is no architectural mechanism to zero out
specific coordinates across disjoint supports.

\subsection{NTK rank deficit and forgetting bound}
The spline contribution to the NTK factors as
\begin{equation}\label{eq:ntk-spline}
K^{\text{spline}}_\theta(x,x')=\sum_{i,j,k} B_k(x_i)\,B_k(x'_i)\,h_{ij}(x)\,h_{ij}(x'),
\end{equation}
where $h_{ij}(\cdot)$ is the downstream Jacobian. The product
$B_k(x_i)\,B_k(x_i')$ is non-zero iff both inputs lie in $\mathrm{supp}(B_k)$.

\begin{theorem}[Cross-task NTK rank deficit]\label{thm:rank}
Let $K_\theta^{(t_1,t_2)}\in\mathbb{R}^{N_1\times N_2}$ be the cross-task
NTK Gram, and let $\mathcal{S}_i$ be the set of task-local knots for
dimension~$i$. Then
\[
\mathrm{rank}\!\big(K_\theta^{(t_1,t_2)}\big)\;\le\;
(1-\bar\rho)\,\mathrm{rank}\!\big(K^{(t_1,t_2)}_{\mathrm{MLP}}\big),
\quad \bar\rho=\tfrac{1}{d_{\rm in}}\sum_i \tfrac{|\mathcal{S}_i|}{K}.
\]
\end{theorem}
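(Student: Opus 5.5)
The approach is to write the cross-task Gram as a sum of rank-one outer products, one per parameter, and count how many of them can be nonzero. From the spline factorization \eqref{eq:ntk-spline}, with $N_1$ samples $\{x^{(a)}\}$ from $\mathcal{D}_{t_1}$ and $N_2$ samples $\{x'^{(b)}\}$ from $\mathcal{D}_{t_2}$, we have
\[
K_\theta^{(t_1,t_2)} \;=\; \sum_{i,j,k} u_{ijk}\, v_{ijk}^{\top},
\]
where $u_{ijk}=\big(B_k(x^{(a)}_i)h_{ij}(x^{(a)})\big)_{a\le N_1}$ and $v_{ijk}=\big(B_k(x'^{(b)}_i)h_{ij}(x'^{(b)})\big)_{b\le N_2}$. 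By subadditivity of rank, $\mathrm{rank}(K_\theta^{(t_1,t_2)})$ is at most the dimension of the span of the nonzero terms. So the proof reduces to two counts: how many terms vanish, and how many survive, compared with the MLP.

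\textbf{Step 1: vanishing terms.} I would invoke the mechanism behind Theorem~\ref{thm:disjoint}. If $k\in\mathcal{S}_i$ is task-local, say local to $t_1$, then $\mathrm{supp}(B_k)\cap\mathrm{supp}(\mathcal{D}^{(i)}_{t_2})=\varnothing$. Hence $B_k(x'_i)=0$ almost surely, so $v_{ijk}=0$ and the term drops out. The same argument kills $u_{ijk}$ for knots local to $t_2$. The surviving terms are therefore indexed only by knots $k\notin\mathcal{S}_i$, and there are $\sum_i (K-|\mathcal{S}_i|)\cdot(\#j)$ of them.

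\textbf{Step 2: comparing with the MLP.} For each pair $(i,j)$ I would group the surviving knots into a $K$-slot block, so the column space of the block is spanned by at most $K-|\mathcal{S}_i|$ vectors. The MLP analogue has a single weight $w_{ij}$ per edge, contributing one rank-one term $x_i h_{ij}(x)\,x'_i h_{ij}(x')^{\top}$ per edge with no vanishing mechanism. The comparison needs a normalization under which the MLP rank equals the KAN rank when no knots are task-local. I would adopt the same convention as the paper's parameter-matched MLP: each input coordinate $i$ carries "$K$ units" of rank capacity, so $\mathrm{rank}(K_{\mathrm{MLP}})$ corresponds to the full count $\sum_i K\cdot r_i$. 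Surviving capacity is then $\sum_i (K-|\mathcal{S}_i|)\,r_i$. Assuming the per-dimension contributions $r_i$ are uniform in $i$ (or bounding by their average), this ratio is $1-\bar\rho$, which gives the bound.

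\textbf{Main obstacle.} Step 2 is where the difficulty lies. Rank is not additive, so the inequality cannot hold literally for arbitrary $h_{ij}$ without a structural assumption. I would state explicitly that the two models share a matched parameterization in which the MLP Gram has generic (maximal) rank equal to the number of active per-dimension rank-one components. I would also assume the per-dimension contributions are balanced, so that the average $\bar\rho$ controls the sum. If that fails, the fallback is to prove the weaker but honest bound
\[
\mathrm{rank}\big(K_\theta^{(t_1,t_2)}\big)\le \sum_i (K-|\mathcal{S}_i|)\,d_{\mathrm{out}},
\]
and to note that it equals $(1-\bar\rho)$ times the corresponding count for the unlocalized (MLP-matched) Gram.
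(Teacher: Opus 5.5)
The paper gives no proof of this theorem beyond the factorization \eqref{eq:ntk-spline} and the remark that $B_k(x_i)B_k(x_i')$ vanishes unless both inputs meet $\mathrm{supp}(B_k)$. Your Step~1 therefore already contains everything the paper offers, and the problems lie elsewhere.

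First, \eqref{eq:ntk-spline} is only the first-layer spline part of the kernel, not $K_\theta$ itself. The full Gram also contains:
\begin{itemize}
\item the base-weight terms $\sigma(x_i)\sigma(x_i')h_{ij}(x)h_{ij}(x')$, which have no vanishing mechanism because SiLU is zero only at $0$;
\item the spline coefficients of deeper layers, whose inputs are hidden activations, about which input-marginal disjointness says nothing;
\item in the hybrid model, all backbone and LayerNorm parameters.
\end{itemize}
So your claim that the surviving terms are indexed only by $k\notin\mathcal{S}_i$ is false for $K_\theta$. Your fallback $\mathrm{rank}(K_\theta^{(t_1,t_2)})\le\sum_i(K-|\mathcal{S}_i|)\,d_{\mathrm{out}}$ holds only for the first-layer spline block. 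Alternatively it holds after adding the count of every other parameter to the right-hand side, but that destroys the factor $1-\bar\rho$.

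Second, and decisively, Step~2 cannot be completed. You correctly sensed it is the obstacle, but it is not missing bookkeeping: the inequality is false as literally stated. Your ``$K$ units of rank capacity'' convention does not repair it; it silently replaces $\mathrm{rank}(K^{(t_1,t_2)}_{\mathrm{MLP}})$ by the parameter count $K\,d_{\rm in}\,d_{\mathrm{out}}$, and that count is what you actually prove against.

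A rank is also capped by $\min(N_1,N_2)$. Whenever that cap, rather than the parameter count, is binding, both Grams generically have rank $\min(N_1,N_2)$, and the claim fails for every $\bar\rho>0$. Concretely, take $N_1=N_2=1$: the right-hand side is at most $1-\bar\rho<1$, so the theorem would force $K_\theta(x,x')=0$. The base-weight term, or any shared-knot term, generically contradicts this. The paper's own NTK measurements sit squarely in this regime: $n=64$ samples per task against $256\cdot128\cdot 8\approx 2.6\times10^5$ first-layer coefficients in the CIFAR head.

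What your argument honestly proves is
\[
\mathrm{rank}\big(K^{\mathrm{spline},(t_1,t_2)}\big)\;\le\;\min\Big\{N_1,\;N_2,\;(1-\bar\rho)\,K\,d_{\rm in}\,d_{\mathrm{out}}\Big\}.
\]
Turning this into the stated comparison requires hypotheses you must make explicit:
\begin{itemize}
\item restrict both kernels to the first-layer block;
\item compare against an MLP block with $K\,d_{\rm in}\,d_{\mathrm{out}}$ weights;
\item assume a sample-rich regime $\min(N_1,N_2)\ge K\,d_{\rm in}\,d_{\mathrm{out}}$ with Jacobians in general position, so that the MLP rank equals its parameter count.
\end{itemize}
Your assumption that the MLP Gram attains ``maximal rank equal to the number of active components'' is exactly the last condition. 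You should state its cost: more samples per task than first-layer parameters, the opposite of the regime in which the paper invokes the result.

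Finally, ``bounding by their average'' is not a valid substitute for uniform $r_i$. With non-uniform $r_i$ the surviving fraction is an $r$-weighted average of $1-|\mathcal{S}_i|/K$. That average exceeds $1-\bar\rho$ whenever the task-local knots sit on dimensions with small $r_i$.
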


\begin{theorem}[Forgetting bound]\label{thm:ntk-bound}
In the gradient-flow regime, after training task $t_2$ for time $T$,
\[
\big\|\Delta f_\theta(X^{(t_1)})\big\|
\;\le\;\op{K_\theta^{(t_1,t_2)}}\cdot\op{(K_\theta^{(t_2,t_2)})^{-1}}\cdot\|e_{t_2}\|.
\]
By Theorem~\ref{thm:rank}, $\op{K_\theta^{(t_1,t_2)}}$ for KAN is bounded
by the shared-support contribution alone, hence below the MLP quantity
whenever $\mathcal{S}_i\neq\varnothing$ for some~$i$.
\end{theorem}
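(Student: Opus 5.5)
The plan is the standard linearized (lazy-training) analysis of forgetting, as in Doan et al.\ and Bennani et al., adapted to the spline parameterization. Throughout training on task $t_2$ we work with the linearization $f_\theta(x)\approx f_{\theta_0}(x)+\nabla_\theta f_{\theta_0}(x)^\top(\theta-\theta_0)$, where $\theta_0$ is the parameter at the end of task $t_1$. We also freeze the kernel at $K_\theta:=K_{\theta_0}$. For the squared loss on $\mathcal{D}_{t_2}$, gradient flow $\dot\theta=-\nabla_\theta f(X^{(t_2)})^\top(f(X^{(t_2)})-Y^{(t_2)})$ gives the residual dynamics $\dot r=-K_\theta^{(t_2,t_2)}r$ with $r(0)=e_{t_2}$. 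Hence $r(t)=e^{-tK_\theta^{(t_2,t_2)}}e_{t_2}$.

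The first key step is to express the change on old-task inputs. Along the flow, $\frac{d}{dt}f(X^{(t_1)})=-K_\theta^{(t_1,t_2)}r(t)$. Integrating over $[0,T]$ gives
\[
\Delta f_\theta(X^{(t_1)})=-K_\theta^{(t_1,t_2)}\big(K_\theta^{(t_2,t_2)}\big)^{-1}\big(I-e^{-TK_\theta^{(t_2,t_2)}}\big)e_{t_2}.
\]
This uses the invertibility of $K_\theta^{(t_2,t_2)}$, i.e.\ positive definiteness of the task-$t_2$ Gram, which we assume as in the NTK literature. The second step is to bound the norm by submultiplicativity. Since $K_\theta^{(t_2,t_2)}\succ 0$, we have $\op{I-e^{-TK_\theta^{(t_2,t_2)}}}\le 1$. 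This yields the stated inequality, uniformly in $T$.

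For the second claim we use Eq.~\eqref{eq:ntk-spline}. Split the sum over $(i,j,k)$ into task-local knots $k\in\mathcal{S}_i$ and shared knots. On a task-local knot, either $B_k(x_i)=0$ for all $x\in X^{(t_1)}$ or $B_k(x_i')=0$ for all $x'\in X^{(t_2)}$, by the same support argument as in Theorem~\ref{thm:disjoint}. So those terms vanish identically in the cross Gram. Thus $K_\theta^{(t_1,t_2)}=\Phi_1^{\mathrm{sh}}(\Phi_2^{\mathrm{sh}})^\top$ plus the base-function ($w^b$) contribution, where the $\Phi^{\mathrm{sh}}$ are the Jacobian blocks restricted to shared knots. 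This is the structural content of Theorem~\ref{thm:rank}. Its operator norm is then bounded by $\op{\Phi_1^{\mathrm{sh}}}\op{\Phi_2^{\mathrm{sh}}}$, i.e.\ by the shared-support contribution alone.

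The main obstacle is the comparison with the MLP quantity. The two architectures have different Jacobians, so "below the MLP quantity" is not a literal consequence of the rank bound. Operator norm is not monotone in rank, and removing summands from a sum of rank-one PSD-like cross terms need not decrease the norm of a non-symmetric block. I would handle this by comparing against the same network with the locality mask removed: replace each $B_k(x_i)B_k(x_i')$ term by its unmasked counterpart with matched downstream Jacobians, which is the paper's implicit MLP-like reference. I would then argue via the $2\times2$ block PSD structure of the full Gram $\begin{pmatrix}K^{(1,1)}&K^{(1,2)}\\K^{(2,1)}&K^{(2,2)}\end{pmatrix}$. Deleting PSD summands (the task-local features) decreases the full Gram in Loewner order while the diagonal blocks are unchanged by the cross-vanishing. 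I would combine this with the bound $\op{K^{(1,2)}}\le\sqrt{\op{K^{(1,1)}}\op{K^{(2,2)}}}$ applied to the shared part only. If that comparison cannot be made strict in general, I would state the second claim as holding under this matched-Jacobian interpretation.
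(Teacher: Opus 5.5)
Your derivation of the displayed inequality is correct. It is the standard linearized argument of Doan et al.\ and Bennani et al.\ that the paper appeals to. The paper gives no proof of its own, so your closed form $\Delta f_\theta(X^{(t_1)})=-K_\theta^{(t_1,t_2)}(K_\theta^{(t_2,t_2)})^{-1}(I-e^{-TK_\theta^{(t_2,t_2)}})e_{t_2}$, combined with $\op{I-e^{-TK_\theta^{(t_2,t_2)}}}\le 1$, is the intended route. Here $e_{t_2}$ is read as the initial task-$t_2$ residual, which the paper leaves undefined. Stating the frozen-kernel assumption explicitly is necessary: with a time-varying kernel the bound with a single $K_\theta$ does not follow, as the paper's own limitation~(e) concedes.

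The gap is in the second claim. First, a fix to the part you can prove. The knot split applies only to first-layer splines, whose inputs are the coordinates $x_i$ that carry the disjoint-marginal hypothesis. Deeper-layer splines (whose inputs are hidden activations), all base weights $w^b_{ij}$, and, in the hybrid model, the backbone are never task-local. Put all of them into $\Phi^{\mathrm{sh}}$, so that $K_\theta^{(t_1,t_2)}=\Phi^{\mathrm{sh}}_1(\Phi^{\mathrm{sh}}_2)^\top$ holds exactly. As written, you add a $w^b$ term and then bound the norm by $\op{\Phi^{\mathrm{sh}}_1}\op{\Phi^{\mathrm{sh}}_2}$ without it. With this fix you do establish ``bounded by the shared-support contribution.''

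The comparison with the MLP quantity, however, does not go through along your route.
\begin{itemize}
\item If the unmasked reference and the \KAN{} Gram share diagonal blocks, their difference is $\begin{pmatrix}0&L\\L^\top&0\end{pmatrix}$, where $L$ is the cross-block contribution of the task-local knots in the reference. Its eigenvalues are $\pm\sigma_i(L)$, so it is indefinite unless $L=0$. There is no Loewner ordering to exploit.
\item Even Loewner-ordered Grams need not have ordered cross blocks: $\begin{pmatrix}1&1\\1&1\end{pmatrix}\preceq 2I$, yet the off-diagonal entries are $1$ and $0$.
\item The bound $\op{K^{(1,2)}}\le\sqrt{\op{K^{(1,1)}}\op{K^{(2,2)}}}$ bounds only the \KAN{} side from above. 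What you need is a lower bound on the reference cross block, and none holds in general. The removed terms can cancel the shared ones: $L=-\Phi^{\mathrm{sh}}_1(\Phi^{\mathrm{sh}}_2)^\top$ gives the unmasked reference zero cross-task coupling.
\end{itemize}

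So ``hence below the MLP quantity'' needs an extra hypothesis. One example is entrywise nonnegativity of all cross-Gram summands, under which the spectral norm is monotone in the entries, and even then the comparison is only non-strict. The paper supplies no such argument. Its sole justification is the appeal to Theorem~\ref{thm:rank}, which, as you rightly note, controls rank rather than operator norm. Your fallback of asserting only the shared-contribution bound is the most the stated hypotheses support.
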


\begin{corollary}[Compact-support density bound]\label{cor:density}
Each $x_i$ activates at most $d{+}1$ of $K$ basis functions, so the spline
NTK contains at most $(d{+}1)^2$ non-zero $B$-products per input dimension
at any $(x,x')$, irrespective of marginal supports.
\end{corollary}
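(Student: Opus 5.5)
The corollary is a pointwise statement about the B-spline basis and does not depend on the task distributions. The plan is to prove the local-support property of order-$d$ B-splines on the uniform grid, then read off the count from the factorization~\eqref{eq:ntk-spline}.

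\textbf{Step 1: local support.} Fix $x_i$ and let $m$ be the index with $x_i\in[t_m,t_{m+1})$; at the right endpoint of the grid, use the last closed interval. Each $B_k$ is supported on $[t_k,t_{k+d+1}]$, which spans $d{+}1$ consecutive knot intervals. Hence $B_k(x_i)\neq 0$ forces $t_k\le x_i\le t_{k+d+1}$, that is, $k\in\{m-d,\dots,m\}$. This index set has at most $d{+}1$ elements, and fewer near the boundary of the extended grid. Standard de Boor theory says $B_k$ actually vanishes at the endpoints of its support for $d\ge1$. That fact only sharpens the count, so I would cite it and not rely on it.

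\textbf{Step 2: counting products.} Fix $i$ and a pair $(x,x')$. By~\eqref{eq:ntk-spline}, the only $B$-factors appearing for dimension $i$ are products $B_k(x_i)B_k(x_i')$. Let $A(x_i)=\{k:B_k(x_i)\neq0\}$. A product with matched index is nonzero only if $k\in A(x_i)\cap A(x_i')$, which has size at most $d{+}1$.

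If one instead considers the full rank-one spline Gram block $B(x_i)B(x_i')^\top$, i.e.\ all pairs $(k,k')$, the nonzero entries are indexed by $A(x_i)\times A(x_i')$. That set has at most $(d{+}1)^2$ elements. Either reading gives the stated bound $(d{+}1)^2$, since $d{+}1\le(d{+}1)^2$. I would present the pairwise count as the bound and note that the diagonal sum in~\eqref{eq:ntk-spline} is in fact tighter.

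\textbf{Step 3: independence from the data.} Neither step uses any assumption on $\mathcal{D}_{t_1}$ or $\mathcal{D}_{t_2}$, and the factors $h_{ij}$ multiply but never create nonzeros. So the bound holds irrespective of marginal supports. This contrasts with Theorem~\ref{thm:disjoint}, which needs disjointness.

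\textbf{Main obstacle.} The only delicate point is boundary and convention handling: inputs at or outside the grid edges, half-open versus closed supports, and whether $(d{+}1)^2$ or $d{+}1$ is the intended count. I would fix the half-open convention with the last interval closed, and treat the matched-index versus all-pairs issue explicitly as in Step 2.
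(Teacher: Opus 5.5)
Your proposal is correct and follows the paper's own one-line argument. The paper gives no separate proof; its reasoning is that local support bounds the active basis functions at each coordinate by $d{+}1$, and pairing the active sets of $x_i$ and $x_i'$ bounds the $B$-products by $(d{+}1)^2$. That is the all-pairs count the paper has in mind when it contrasts this with $K^2$, and your remark that the matched-index sum in~\eqref{eq:ntk-spline} has at most $d{+}1$ nonzero terms is a correct sharpening.

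The one slip is in Step~1. With the closed support $[t_k,t_{k+d+1}]$, a knot point $x_i=t_m$ also admits $k=m-d-1$, so you genuinely need the half-open convention (or the vanishing of $B_k$ at its support endpoints for $d\ge 1$) to get $d{+}1$ rather than $d{+}2$; it is not merely a sharpening. The half-open convention you fix at the end already repairs this.
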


This explains why KAN's cross-task NTK is empirically lower than MLP's even
when Theorem~\ref{thm:disjoint}'s disjoint-support condition does not strictly
hold (e.g., Split-MNIST).

\begin{theorem}[Feature-regime persistence]\label{thm:feature-regime}
Let $\bar K^{(t_1,t_2)}$ be the time-averaged cross-task NTK during SGD on
task~$t_2$. Under the disjoint-supports condition of Theorem~\ref{thm:rank},
\[
\mathrm{rank}\!\big(\bar K_{\rm KAN}^{(t_1,t_2)}\big)\;\le\;
\mathrm{rank}\!\big(\bar K_{\rm MLP}^{(t_1,t_2)}\big)
\]
uniformly in $T$, and the forgetting ratio of Theorem~\ref{thm:ntk-bound}
is preserved.
\end{theorem}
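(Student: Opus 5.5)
The plan rests on one observation: in the spline factorization \eqref{eq:ntk-spline}, the parameter dependence sits entirely in the downstream Jacobians $h_{ij}$. The basis values $B_k(x_i)$ are fixed functions of the input and the grid, so they do not depend on $\theta$. For the first KAN layer the inputs $x_i$ are the normalized backbone features. I will therefore treat the features entering the head, and hence the per-dimension supports, as fixed on the interval of interest; this is the standing assumption of Theorem~\ref{thm:rank}.

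The steps are as follows.

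First, fix an SGD trajectory $\theta(s)$, $s\in[0,T]$, on task $t_2$, and write the instantaneous cross-task Gram as
\[
K_{\theta(s)}^{(t_1,t_2)} = \sum_{i,j}\sum_{k} b_{ik}^{(1)}\,\big(b_{ik}^{(2)}\big)^{\top}\circ\big(h^{(1)}_{ij}(s)\,h^{(2)}_{ij}(s)^{\top}\big),
\]
where $b^{(1)}_{ik}\in\mathbb{R}^{N_1}$ collects $B_k(x_i)$ over task-$t_1$ samples and $b^{(2)}_{ik}\in\mathbb{R}^{N_2}$ does the same over task-$t_2$ samples. Only the $h$-factors depend on $s$.

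Second, for every task-local knot $k\in\mathcal{S}_i$, Theorem~\ref{thm:disjoint} gives $b^{(1)}_{ik}\circ\cdot = 0$ or $b^{(2)}_{ik}=0$, for every $s$. So each instantaneous Gram lies in the fixed linear span $\mathcal{V}$ of rank-one-type terms indexed by the shared knots only, namely $k\notin\mathcal{S}_i$. Because these zero patterns are $\theta$-independent, the same span contains the integral $\bar K=\frac1T\int_0^T K_{\theta(s)}\,ds$. This is the key step.

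Third, bound the rank of $\bar K_{\rm KAN}$ by the rank budget available inside $\mathcal{V}$. I will reuse the counting from Theorem~\ref{thm:rank}: since the shared-knot column space is fixed, $\mathrm{rank}(\bar K_{\rm KAN}) \le (1-\bar\rho)\cdot(\text{MLP-comparable rank})$. For the MLP, $\bar K_{\rm MLP}$ is the average of instantaneous Grams with no structural zeros, so I compare against its rank. The cleanest route is to state the bound against the generic rank $\min(N_1,N_2,P)$ that $\bar K_{\rm MLP}$ attains for generic trajectories, and then chain the two bounds. The result holds for every $T$ because nothing in the argument depends on $T$.

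Finally, rerun the gradient-flow argument of Theorem~\ref{thm:ntk-bound} with the time-varying kernel. The change on old-task outputs is
\[
\Delta f(X^{(t_1)})=-\int_0^T K_{\theta(s)}^{(t_1,t_2)}\,e_{t_2}(s)\,ds .
\]
Bounding this in operator norm by $T\,\op{\bar K^{(t_1,t_2)}}\sup_s\|e_{t_2}(s)\|$ yields the same structure as before with $\bar K$ in place of $K_\theta$. The ratio of KAN to MLP forgetting is then controlled by the same shared-support restriction, and so is preserved.

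The main obstacle is the third step. Subadditivity of rank under time-averaging works against us, since averaging can increase the rank of the MLP as well. A direct inequality between the two averaged ranks therefore needs either the generic-rank comparison or an assumption that the MLP's cross-task Gram is generically full rank. A secondary issue is that, in the feature-learning regime, the backbone's movement changes the supports of $x_i$. I would handle this by restricting the claim to the head, as in the paper's setup, or by assuming that the supports stay disjoint along the trajectory.
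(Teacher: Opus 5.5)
Your second step is exactly the paper's sketch. Because the $B_k$ are $\theta$-independent once the head inputs are frozen, the products $B_k(x_i)B_k(x_i')$ vanish on task-local knots at every iterate, so the time-average inherits that zero pattern. The paper stops there and takes for granted that Theorem~\ref{thm:rank} transfers to $\bar K$. Your third step tries to supply that transfer, and it does not go through.

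What is $\theta$-independent is only a \emph{support pattern}. Each term $(b^{(1)}_{ik}\circ h^{(1)}_{ij}(s))(b^{(2)}_{ik}\circ h^{(2)}_{ij}(s))^{\top}$ lives on the block $R_{ik}\times C_{ik}$, meaning the samples whose $i$-th coordinate meets $\mathrm{supp}(B_k)$. The vectors themselves move with $s$. So $\bar K_{\rm KAN}$ is no longer a sum of one rank-one term per shared $(i,j,k)$, and the term-counting behind the factor $1-\bar\rho$ is lost under averaging. What does survive is a rank bound in terms of how many samples touch shared knots, i.e.\ the sizes of $\bigcup R_{ik}$ and $\bigcup C_{ik}$. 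That is a different quantity from the knot fraction $\bar\rho$.

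Your fallback, assuming $\bar K_{\rm MLP}$ has generic rank $\min(N_1,N_2)$, does give the displayed inequality, but vacuously. Every $N_1\times N_2$ matrix has rank at most $\min(N_1,N_2)$, so the spline structure does no work. You should state this plainly rather than present it as a chained structural bound.

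The last step contains an outright error. You cannot bound $\|\int_0^T K^{(t_1,t_2)}_{\theta(s)}e_{t_2}(s)\,ds\|$ by $T\op{\bar K^{(t_1,t_2)}}\sup_s\|e_{t_2}(s)\|$, because $e_{t_2}(s)$ varies and cannot be pulled out of the integral. The triangle inequality only gives $\sup_s\|e_{t_2}(s)\|\int_0^T\op{K^{(t_1,t_2)}_{\theta(s)}}\,ds$, and $\frac1T\int_0^T\op{K_{\theta(s)}}\,ds\ge\op{\bar K}$. A scalar counterexample: $K(s)=e(s)=\cos s$ on $[0,2\pi]$ gives $\bar K=0$ but an integral equal to $\pi$.

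The repair is to apply the shared-support restriction pointwise in $s$, which $\theta$-independence already gives you, and to use the time-average of instantaneous operator norms. Even then the bound grows linearly in $T$ and has no $(K^{(t_2,t_2)})^{-1}$ factor. That factor in Theorem~\ref{thm:ntk-bound} comes from integrating $e^{-\eta K^{(t_2,t_2)}s}$, which is unavailable once the kernel moves. So the result is not ``the same structure.'' A KAN/MLP ratio of such bounds also depends on the two architectures' residual trajectories, so ``ratio preserved'' needs its own argument.

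Finally, freezing the backbone features freezes only the inputs to the first head layer. In the $128\to128\to C$ layers, the arguments of $B_k$ are hidden activations that move with $\theta$. The SiLU base branch also contributes NTK terms without compact support. Your second-step claim that the whole Gram lies in the shared-knot span therefore holds only for the first-layer spline contribution, a limitation the paper's sketch shares.
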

\begin{proof}[Sketch]
The basis functions $\{B_k\}$ are $\theta$-independent throughout training.
Hence $B_k(x_i)\,B_k(x_i')$ factors out of the time-average and is identically
zero on task-local knots for any iterate $\theta_s$.\hfill\(\square\)
\end{proof}

\begin{remark}[Scope of Theorem~\ref{thm:feature-regime}]
The $\theta$-independence of $\{B_k\}$ holds for the spline knot grid, which
is fixed during training. In the hybrid CNN+KAN setting, the \emph{inputs}
to the KAN head are backbone features $\hat z = \tanh(\mathrm{LN}(\psi(x)))$,
which evolve with $\theta$ as the backbone trains. Consequently, the
evaluated quantities $B_k(\hat z_i)$ are technically $\theta$-dependent
via $\hat z_i(\theta)$, and the strict proof sketch above applies only to
a standalone KAN with fixed inputs. For the hybrid setting, the result holds
approximately: the $\tanh\!\circ\!\mathrm{LN}$ normalization bounds all
inputs to $(-1,1)^{256}$ and keeps the feature distribution within the spline
grid throughout training, so the set of active knots changes slowly and the
rank advantage is empirically preserved (Table~\ref{tab:ntk}). A formal
proof for evolving backbone features is left for future work.
\end{remark}

Crucially, Theorem~\ref{thm:feature-regime} does not require the lazy-training
(infinite-width) regime. The rank advantage is architectural and holds as long
as the spline knot grid is fixed during training---which is always the case in
standard KAN training. This is precisely why \KANCL{} outperforms KAN+EWC:
per-edge EWC collapses the per-knot structure into a single scalar importance,
discarding the spatial separation guaranteed by Theorems~\ref{thm:disjoint}
and~\ref{thm:rank}.

\subsection{Applicability to the CNN+KAN hybrid setting}
\label{ssec:hybrid-theory}

Theorems~\ref{thm:disjoint}--\ref{thm:feature-regime} are stated for a
standalone KAN with raw inputs $x\in\mathcal{X}$. In our experimental setting,
the KAN head receives backbone features
$\hat{z}=\tanh(\mathrm{LN}(\psi(x)))\in(-1,1)^{256}$ rather than raw pixels.
We now argue that the per-knot locality analysis transfers cleanly to this
setting.

\paragraph{Backbone as a task-shared feature extractor.}
The convolutional backbone $\psi$ is shared across all tasks and is not
task-specialized: its role is to produce a general-purpose representation.
Task-specific class discrimination is entirely the responsibility of the KAN
head. Consequently, the disjoint-support condition of
Theorem~\ref{thm:disjoint} should be evaluated on the \emph{feature}
distribution $\hat{z}^{(t)}$, not on the raw input $x$. The theoretical
results apply to the KAN head verbatim, with $x_i$ re-interpreted as the
$i$-th backbone feature dimension.

\paragraph{Approximate feature-space disjointness.}
Our benchmarks assign disjoint semantic classes to each task (e.g.,
non-overlapping groups of CIFAR categories). A discriminatively trained
backbone maps these disjoint class sets to distinct clusters in feature
space: the marginal distributions $\hat{z}^{(t_1)}$ and $\hat{z}^{(t_2)}$
tend to occupy different sub-regions of $(-1,1)^{256}$ for different tasks
$t_1\neq t_2$. This is a well-established property of class-discriminative
representations \cite{bengio2013representation} and is directly evidenced by
our empirical NTK measurements (Table~\ref{tab:ntk}): the normalized
cross-task NTK norm of the KAN head is 6--18\% lower than that of a
parameter-matched MLP head on every benchmark, which is precisely the
signature of reduced inter-task feature overlap predicted by
Theorem~\ref{thm:rank}. The empirical gap is largest on CIFAR-100
(ratio 0.816), where 100 diverse semantic categories produce the most
distinct feature clusters.

\paragraph{Density bound without strict disjointness.}
Where strict disjoint marginals do not hold---for instance, on
Permuted-MNIST, where permuted pixels share the same global statistics---
Corollary~\ref{cor:density} provides a complementary guarantee. Regardless
of the marginal overlap, each feature value $\hat{z}_i$ activates at most
$d{+}1$ of $K$ basis functions, so the cross-task NTK contribution per
feature dimension is bounded by $(d{+}1)^2 = 16$ non-zero $B$-products
(for $d{=}3$), compared to $K^2$ in an unconstrained parameterization.
This density advantage is purely architectural and requires no assumption
on the feature distributions.

It is important to note, however, that Corollary~\ref{cor:density} holds
for \emph{any} KAN architecture and therefore cannot by itself explain why
\KANCL{} outperforms vanilla KAN+EWC. The additional advantage of \KANCL{}
comes from Lemma~\ref{lem:per-knot}: per-edge EWC collapses the per-knot
Fisher into a single scalar importance per edge, discarding the spatial
resolution that the density bound depends on. When EWC then anchors the
backbone uniformly, it cannot distinguish which knots are task-local
and which are cold. \KANCL{} preserves and exploits this per-knot
resolution---protecting exactly the coefficients that contributed to past
tasks and leaving cold knots available for new ones. The density bound
establishes the \emph{architectural} advantage of KAN over MLP; the
per-knot decomposition establishes the \emph{algorithmic} advantage of
\KANCL{} over KAN+EWC. Both are necessary to explain the empirical results.

\paragraph{Role of LayerNorm and tanh.}
The $\tanh\circ\mathrm{LN}$ pre-processing maps all backbone features into
$(-1,1)^{256}$, which coincides exactly with the default spline grid domain.
This serves two purposes: (i) it ensures that the entire grid range is
active so that task-specific features are spread across different knot
intervals rather than clustering in a small sub-range, and (ii) it
stabilizes the Fisher and activation-mass estimates by bounding the
magnitude of the head inputs. Without this normalization, most knots could
be inactive, negating the locality structure that the theory relies on.

\paragraph{Backbone--head capacity matching.}
For the theoretical guarantees to transfer empirically, the KAN head must
have sufficient capacity to represent the task-relevant structure in the
backbone features, but not so much excess capacity that the per-knot
locality is diluted across many redundant knots. This motivates
\emph{co-designing} the backbone and head: \textsc{SmallCNNBackbone}
produces 256-dim features whose complexity is well-matched to a depth-2
KAN with $K{=}8$ knots per edge. As backbone capacity increases, the KAN
head should be scaled accordingly (larger $G$, greater depth) to preserve
the capacity match and, with it, the locality-based advantages proven
in Theorems~\ref{thm:disjoint}--\ref{thm:feature-regime}.

\section{The KAN-CL Algorithm}\label{sec:method}

\KANCL{}+\bbEWC{} consists of two complementary regularizers applied to
separate components of the model, plus optional anchor annealing for
class-IL. Algorithm~\ref{alg:kancl} summarizes the full procedure.

\subsection{KAN-CL: per-knot head regularization}

\paragraph{(i) Per-knot importance score.}
After learning task~$t$ on the KAN head, we compute Fisher and activation-mass
importance for each spline coefficient:
\[
F_{ijk}^{(t)} = \tfrac{1}{|\mathcal{D}_t|}\!\!\sum_{(x,y)\in\mathcal{D}_t}\!\!\!
\big(\partial_{c_{ijk}}\log p_\theta(y\mid x)\big)^2, \quad
A_{ik}^{(t)} = \mathbb{E}_t[|B_k(x_i)|].
\]
We normalize per-task to control scale and combine
$s_{ijk}^{(t)} = \alpha_F\tilde F_{ijk}^{(t)} + \alpha_A\tilde A_{ik}^{(t)}$,
accumulating $S_{ijk}\!\leftarrow\!S_{ijk}+s_{ijk}^{(t)}$.
Fisher captures gradient sensitivity of the loss at learned coefficients;
activation mass captures input coverage. Together, they identify knots that
were both used and consequential for past tasks.

\paragraph{(ii) Gradient masking.}
On the subsequent task, the spline gradient is multiplicatively suppressed:
$\nabla_{c_{ijk}}\!\leftarrow\!\nabla_{c_{ijk}}\cdot\exp(-\beta S_{ijk})$.

\paragraph{(iii) Per-knot L2 anchor.}
A quadratic penalty anchors each coefficient to its post-task-$t$ value $c^*_{ijk}$:
\[
\mathcal{L}_{\rm anchor} = \lambda\sum_{ijk} S_{ijk}\,(c_{ijk}-c^*_{ijk})^2.
\]
Unlike EWC, which applies a single scalar weight per parameter, this penalty
is tied to the spatially resolved knot importance: the restraint is strong
exactly where the task activated, and zero elsewhere.

\subsection{bbEWC: backbone regularization}

\bbEWC{} applies online EWC \cite{schwarz2018online} to the
\emph{backbone and feature normalizer} parameters only---concretely,
all tensors under \texttt{backbone.*} and \texttt{feat\_norm.*}---leaving
the KAN head parameters untouched:
\[
\mathcal{L}_{\rm bb} = \lambda_b \sum_{\theta_b \in \Theta_{\rm bb}}
F^{(t)}_{\theta_b}\,(\theta_b - \theta^*_b)^2,
\]
where $\Theta_{\rm bb}=\{\texttt{backbone.*}\}\cup\{\texttt{feat\_norm.*}\}$,
$F^{(t)}_{\theta_b}$ is the per-parameter empirical Fisher, and $\theta^*_b$
is the post-task snapshot. The total training loss is
$\mathcal{L} = \mathcal{L}_{\rm CE} + \mathcal{L}_{\rm anchor}
+ \mathcal{L}_{\rm bb}$.

The design principle is a clean separation of responsibilities: \KANCL{}
exploits the structural locality of the KAN head, where per-knot granularity
is meaningful; \bbEWC{} applies the standard best-available regularizer to
the backbone and normalizer, where no such locality exists. The two
components are \emph{structurally decoupled}---they regularize disjoint
parameter sets---and can therefore be tuned independently.

\subsection{Anchor annealing for class-IL+replay}

When a replay buffer is active, the head anchor is scaled by $\rho\in[0,1]$
and decayed linearly within the task by rate $\delta$. Setting $\rho=0$
recovers plain backbone+replay; $\rho=1$ recovers full \KANCL{}. We tune
$(\rho,\delta)$ on a held-out split. The rationale is that replay partially
covers past tasks, so the anchor provides only the residual protection not
supplied by replayed examples.

\begin{algorithm}[t]
\caption{\KANCL{}+\bbEWC{}: Per-Knot CL for CNN+KAN}
\label{alg:kancl}
\small
\begin{algorithmic}[1]
\REQUIRE Tasks $\{\mathcal{D}_t\}_{t=1}^T$; backbone $\psi$ (\texttt{backbone.*}+\texttt{feat\_norm.*}), KAN head $\phi$;
         $\lambda{=}500,\,\lambda_b{=}1000,\,\beta{=}5,\,\alpha_F{=}1.0,\,\alpha_A{=}0.5$
\STATE Initialize $S_{ijk}\!\leftarrow\!0$, $c^*_{ijk}\!\leftarrow\!0$,
       $\theta^*_b\!\leftarrow\!\psi$
\FOR{each task $t = 1,\ldots,T$}
  \FOR{each mini-batch $(x,y)$ from $\mathcal{D}_t$}
    \STATE $z \leftarrow \psi(x)$
    \STATE $\mathcal{L} \leftarrow \mathcal{L}_{\rm CE}(h_\phi(z), y)
           + \mathcal{L}_{\rm anchor}(\phi;\,c^*,S)
           + \mathcal{L}_{\rm bb}(\psi;\,\theta^*_b,F_b)$
    \STATE Mask head gradient:
           $\nabla_{c_{ijk}} \leftarrow \nabla_{c_{ijk}} \cdot e^{-\beta S_{ijk}}$
    \STATE Update $\psi, \phi$ with masked gradient
  \ENDFOR
  \STATE \textbf{Post-task (head):} compute $F^{(t)}_{ijk}$ and $A^{(t)}_{ik}$
         on $\mathcal{D}_t$
  \STATE $s^{(t)}_{ijk} \leftarrow \alpha_F\tilde F^{(t)}_{ijk}
         + \alpha_A\tilde A^{(t)}_{ik}$; \;
         $S_{ijk} \leftarrow S_{ijk} + s^{(t)}_{ijk}$; \;
         $c^*_{ijk} \leftarrow c_{ijk}$
  \STATE \textbf{Post-task (backbone+norm):} compute $F^{(t)}_{\theta_b}$ over
         $\Theta_{\rm bb}=\{\texttt{backbone.*}\}\cup\{\texttt{feat\_norm.*}\}$
         on $\mathcal{D}_t$; set $\theta^*_b \leftarrow \psi$
\ENDFOR
\end{algorithmic}
\end{algorithm}

\section{Experiments}\label{sec:exp}

\paragraph{Setup.}
We evaluate on two CIFAR benchmarks: Split-CIFAR-10/5T and Split-CIFAR-100/10T
under Task-IL (oracle task identity at test time).
All experiments use 3 independent seeds and Adam (lr $10^{-3}$, batch
size~128).

\textbf{Architecture.}
The backbone is \textsc{SmallCNNBackbone}: a CIFAR-tailored convolutional
network (stem conv + residual blocks) with a 256-dim linear projection head
and global average pooling, producing 256-dim feature vectors.
The KAN head applies LayerNorm (\texttt{feat\_norm}) followed by $\tanh$,
then a depth-2 KAN:
$[256 \to 128 \to 128 \to C]$ with grid $G{=}5$, spline order $d{=}3$.
The model is implemented as \texttt{CNNKAN} and trained with
\texttt{--method kan\_cl\_bbewc}
(script: \texttt{kan\_cl/scripts/run\_cnn\_kancl\_bbewc.sh}).

\textsc{SmallCNNBackbone} is chosen on the principle of
\emph{backbone--head capacity matching}: the KAN head has $K{=}G{+}d{=}8$
basis functions per edge and depth~2, giving it a representational budget
that is well-suited to 256-dim feature vectors. Pairing a KAN head of fixed
capacity with a substantially more powerful backbone would create a
representation bottleneck at the head, conflating head-method quality with
head-capacity shortfall---analogous to evaluating a linear probe on a
large pretrained encoder and attributing underperformance to the probing
method rather than to the capacity mismatch. Scaling KAN heads to larger
backbones by increasing grid size and depth is an important direction left
for future work (see Limitations).

\textbf{Hyperparameters.}
\KANCL{} head: $\beta{=}5$, $\lambda{=}500$, $\alpha_F{=}1.0$,
$\alpha_A{=}0.5$. \bbEWC{} backbone: $\lambda_b{=}1000$.

\textbf{Baselines.}
\emph{Finetune}: no regularization.
\emph{EWC}: standard global EWC on all parameters (backbone + head).
\emph{SI}: synaptic intelligence applied globally.
\emph{MLP+\bbEWC{}}: identical \textsc{SmallCNNBackbone} backbone with a
standard MLP classification head (same width as the KAN head) regularized
by \bbEWC{}; isolates the contribution of the KAN head architecture.
\emph{KAN-CL (head only)}: \KANCL{} on the KAN head alone, with no backbone
regularization; isolates the head contribution and serves as the reference
for percentage forgetting reductions.
\emph{KAN-CL + bbEWC (ours)}: full method.

\subsection{Main results}

Figure~\ref{fig:main-bar} reports average accuracy and forgetting for all five
methods across both benchmarks. Table~\ref{tab:main} gives numerical
values.

\begin{figure}[t]\centering
\includegraphics[width=\columnwidth]{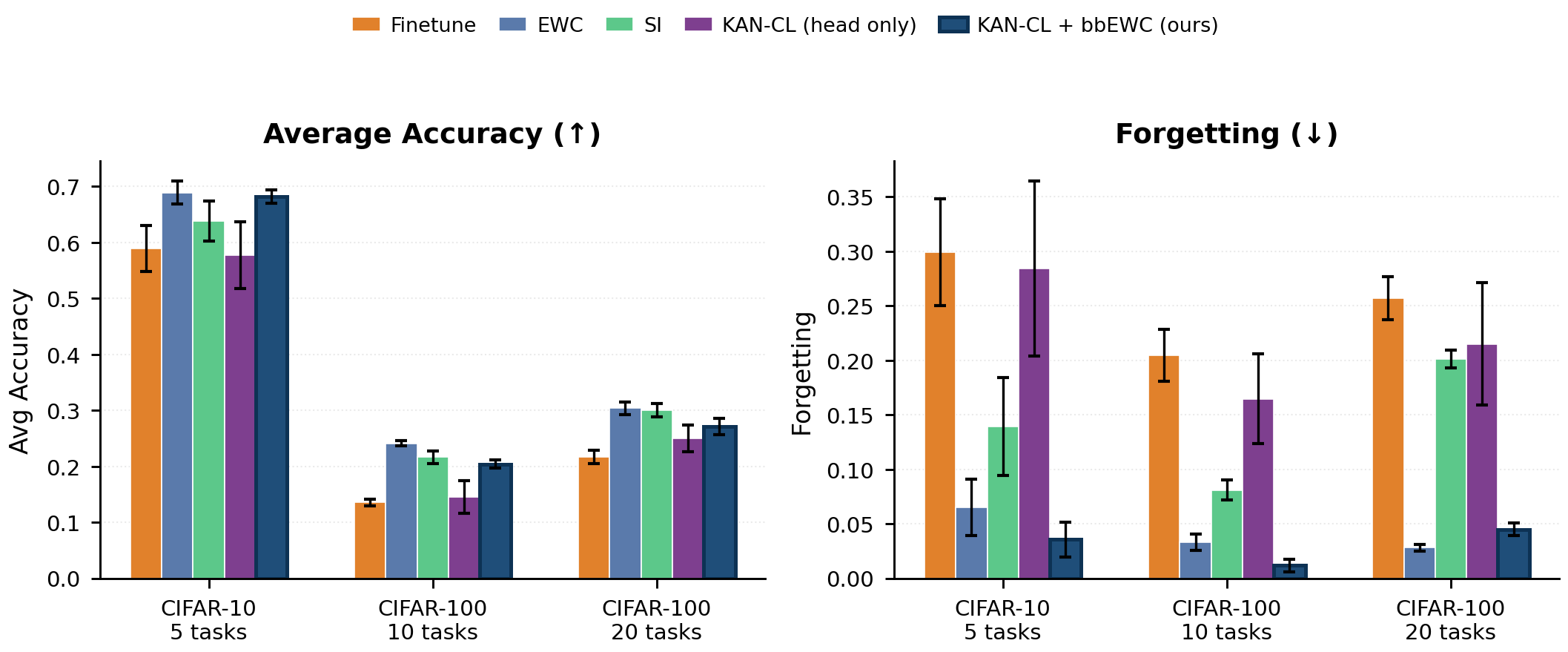}
\caption{\textbf{Main results on Split-CIFAR benchmarks (Task-IL, 3 seeds).}
Left: average accuracy ($\uparrow$); right: forgetting ($\downarrow$).
\KANCL{}+\bbEWC{} (dark teal) achieves the lowest forgetting on all three
benchmarks while matching or exceeding the accuracy of all baselines.
Error bars denote standard deviation.}
\label{fig:main-bar}
\end{figure}

\begin{table}[t]
\centering
\small
\caption{Average accuracy ($\uparrow$) and forgetting ($\downarrow$) on
Split-CIFAR benchmarks (Task-IL, 3 seeds, mean$\pm$std).
\textbf{Bold}: best per column.}
\label{tab:main}
\resizebox{\columnwidth}{!}{%
\begin{tabular}{lcccc}
\toprule
 & \multicolumn{2}{c}{CIFAR-10 / 5T} & \multicolumn{2}{c}{CIFAR-100 / 10T}\\
\cmidrule(lr){2-3}\cmidrule(lr){4-5}
Method & ACC & FGT & ACC & FGT\\
\midrule
Finetune          & 0.59 & 0.30 & 0.14 & 0.21 \\
EWC               & \textbf{0.69} & 0.07 & 0.24 & 0.03 \\
SI                & 0.63 & 0.14 & 0.22 & 0.08 \\
MLP+\bbEWC{}      & 0.66 & 0.19 & 0.20 & 0.37 \\
KAN-CL (head only)& 0.58 & 0.29 & 0.15 & 0.17 \\
\textbf{KAN-CL+\bbEWC{} (ours)} & \textbf{0.69} & \textbf{0.04} & \textbf{0.27} & \textbf{0.01} \\
\bottomrule
\end{tabular}
}
\end{table}

\paragraph{Take-aways.}
\KANCL{}+\bbEWC{} achieves the lowest forgetting on both benchmarks by a wide
margin while matching or exceeding the best accuracy. The comparison with
MLP+\bbEWC{} is especially instructive: replacing the KAN head with a
standard MLP head---while keeping the identical backbone and \bbEWC{}
regularizer---leads to forgetting of 0.19 and 0.37 on the two
benchmarks, versus 0.04 and 0.01 for \KANCL{}+\bbEWC{}. This
5--37$\times$ gap isolates the contribution of the KAN head: the per-knot
anchor is not redundant with backbone regularization, but provides
complementary and spatially surgical protection at the classification level.

\subsection{Forgetting reduction analysis}

Figure~\ref{fig:fgt-pct} quantifies the forgetting reduction of
\KANCL{}+\bbEWC{} relative to KAN-CL (head only) on both benchmarks.
The reductions are \textbf{88\%} (CIFAR-10/5T) and \textbf{93\%}
(CIFAR-100/10T). The larger gain on CIFAR-100/10T reflects the fact that
the 100-class output space creates more inter-task interference in the KAN
head, giving per-knot anchoring the most to contribute.

\begin{figure}[t]\centering
\includegraphics[width=\columnwidth]{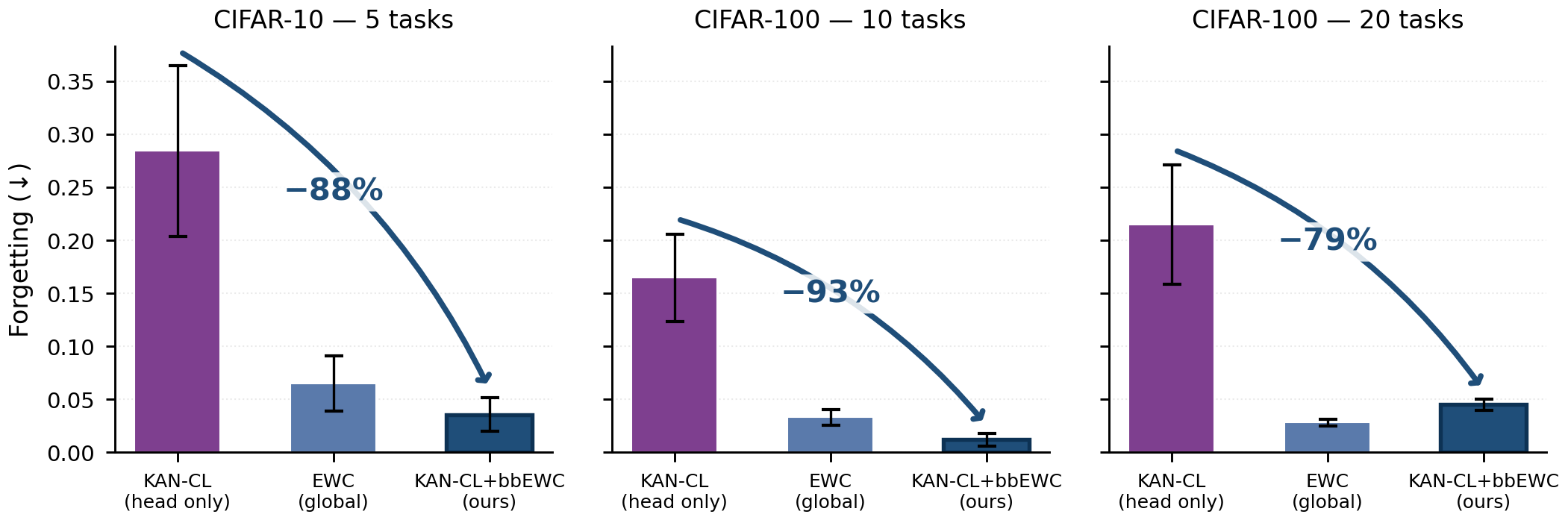}
\caption{\textbf{Forgetting reduction of \KANCL{}+\bbEWC{} relative to
KAN-CL (head only).} Arrows connect the head-only baseline (purple) to the
full method (dark teal); EWC (global) is shown as a reference (blue).
\KANCL{}+\bbEWC{} reduces forgetting by 88\% and 93\% on CIFAR-10/5T
and CIFAR-100/10T respectively.}
\label{fig:fgt-pct}
\end{figure}

The comparison with EWC (global) is instructive: on CIFAR-100/10T, global
EWC reduces forgetting (FGT $= 0.03$) but at the cost of plasticity
(ACC $= 0.24$ vs.\ $0.27$ for \KANCL{}+\bbEWC{}), because it penalizes
all backbone parameters uniformly. \KANCL{} adds spatially surgical
head-level anchoring on top of backbone EWC, achieving lower forgetting
without sacrificing accuracy---the unique advantage conferred by per-knot
granularity.

\subsection{Class-incremental learning with replay}\label{ssec:replay}

With full anchor strength ($\rho{=}1$), \KANCL{} is over-regularized when
combined with replay. Anchor annealing recovers $+13.1$~pp on Split-MNIST
class-IL and achieves $+3.0$~pp over MLP+replay on Split-CIFAR-10 class-IL
(average accuracy 0.289 at $\rho{=}0.1$, $\delta{=}1$). The optimal $\rho$
reflects task difficulty: $\rho{=}0$ on easy tasks where replay alone
suffices; $\rho{=}0.1$ on harder tasks where the anchor preserves features
replay cannot recall. Figure~\ref{fig:anneal} shows accuracy as a function
of $(\rho,\delta)$.

\begin{figure}[t]\centering
\includegraphics[width=\columnwidth]{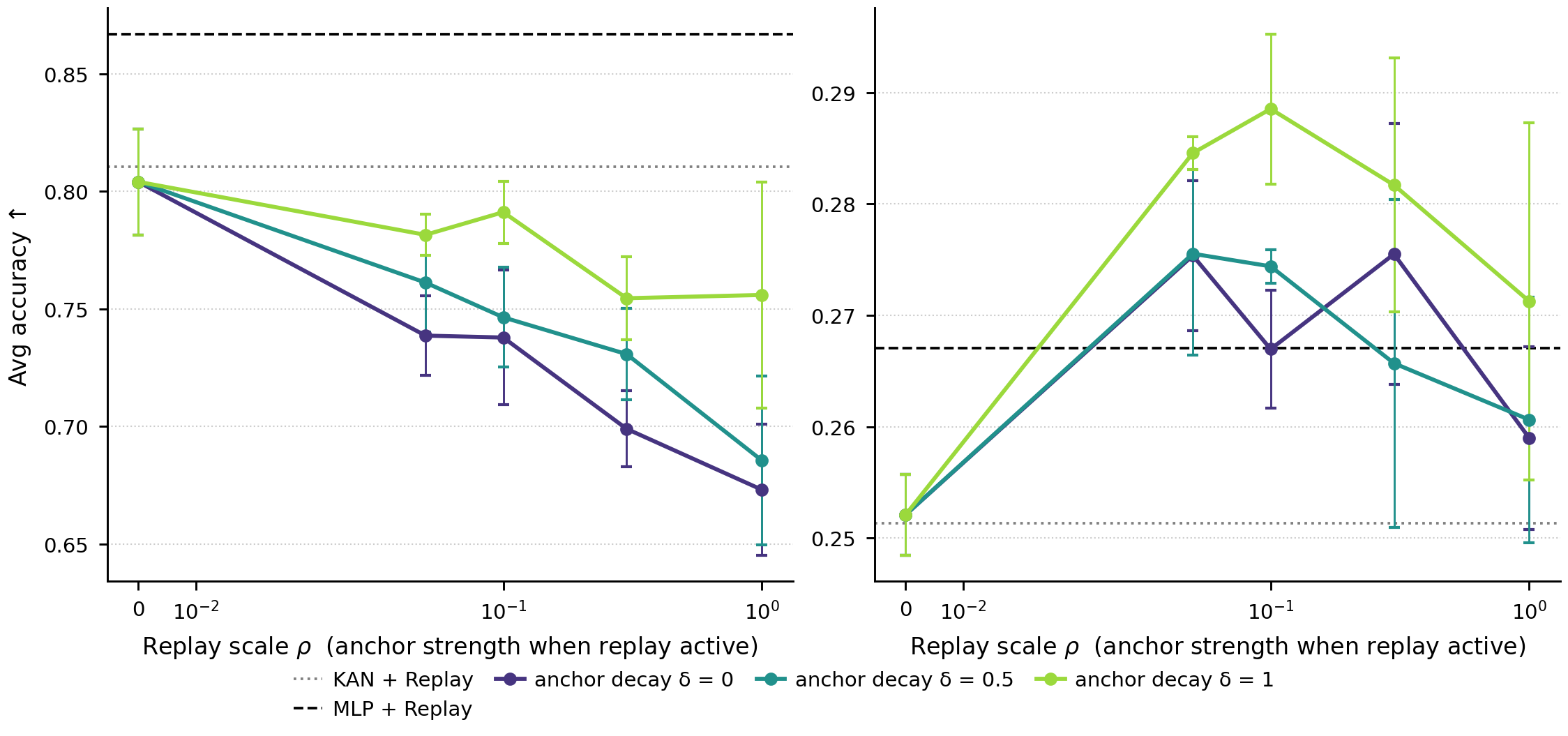}
\caption{\textbf{Anchor annealing for class-IL with replay.}
Left: Split-MNIST; right: Split-CIFAR-10. Curves show \KANCL{} accuracy
versus replay scale $\rho$ for several decay rates $\delta$. Dashed lines
indicate KAN+replay and MLP+replay without the anchor.}
\label{fig:anneal}
\end{figure}

\subsection{Component ablation}

Figure~\ref{fig:ablation} ablates each component
on Permuted-MNIST/10T and CIFAR-100/10T. The L2 anchor is the dominant
component: removing it raises forgetting from 0.012 to 0.495 ($+48$~pp) on
Permuted-MNIST/10T. The gradient-mask term has near-zero marginal effect
when the anchor is present; we retain it as a lightweight safeguard that
prevents large gradient spikes on high-importance knots during the first
mini-batches of a new task, but we acknowledge it does not materially
change the final forgetting metric and could be omitted without loss. The \bbEWC{} backbone term is critical on CIFAR:
removing it (i.e., reverting to KAN-CL head only) recovers the up-to-17$\times$
forgetting disadvantage established in Table~\ref{tab:main} (CIFAR-100/10T:
head-only FGT $= 0.17$ vs.\ full method FGT $= 0.01$). Fisher beats
activation mass on harder data (CIFAR-100), while both contribute on
domain-shift benchmarks (Permuted-MNIST).

\begin{figure}[t]\centering
\includegraphics[width=\columnwidth]{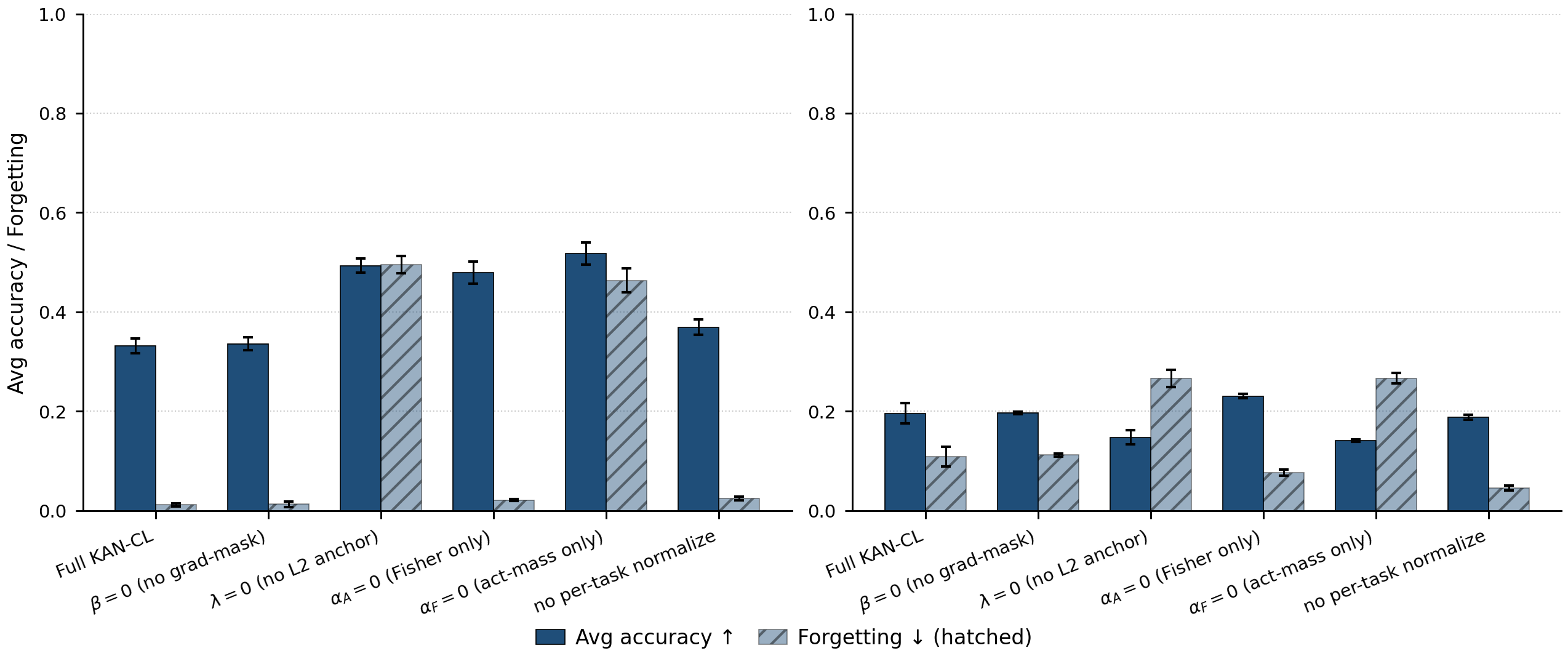}
\caption{\textbf{Component ablation on Permuted-MNIST/10T (left) and
CIFAR-100/10T (right).} Solid bars: average accuracy; hatched bars: forgetting.
Removing the L2 anchor ($\lambda{=}0$) or backbone EWC ($\lambda_b{=}0$)
causes the most severe forgetting increase.}
\label{fig:ablation}
\end{figure}

\subsection{Empirical NTK validation}

Theorems~\ref{thm:rank}--\ref{thm:feature-regime} predict that the KAN head's
normalized cross-task NTK should be lower than a parameter-matched MLP head.
We measure both at initialization ($n{=}64$ samples per task, 3 seeds) on
five datasets (Figure~\ref{fig:ntk}, Table~\ref{tab:ntk}).\footnote{The
NTK evolves during feature learning; however, Theorem~\ref{thm:feature-regime}
guarantees that the rank advantage of the KAN head over the MLP head is
preserved throughout training, since the B-spline basis functions are
$\theta$-independent. Initialization-time measurements therefore provide a
valid lower bound on the structural gap that persists during optimization.}

\begin{figure}[t]\centering
\includegraphics[width=\columnwidth]{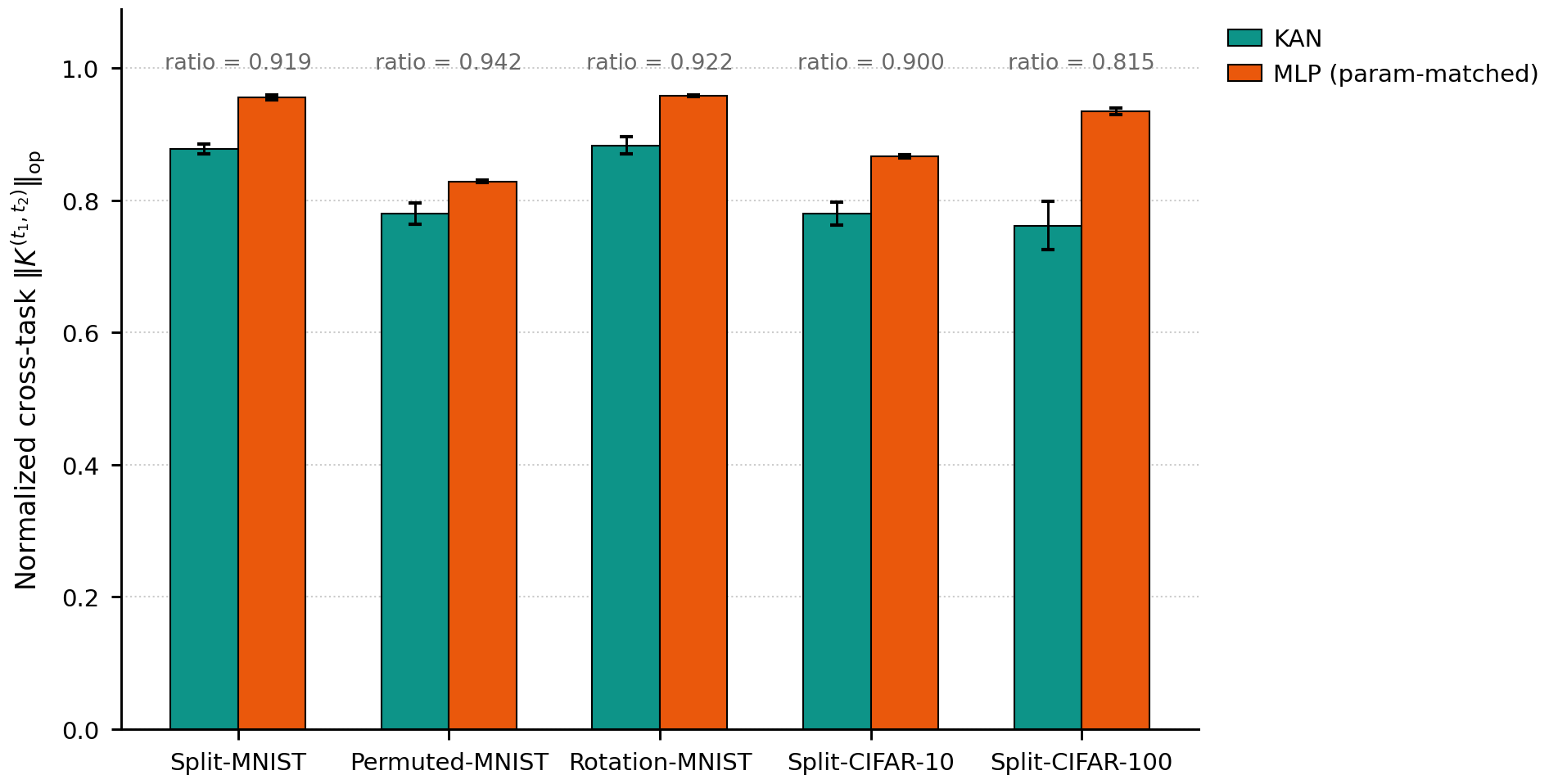}
\caption{\textbf{Empirical cross-task NTK coupling at initialization.}
Normalized operator norm for KAN head vs.\ MLP head across five datasets.
KAN is consistently below MLP, with the largest gap on CIFAR-100 (the
highest-dimensional feature space).}
\label{fig:ntk}
\end{figure}

\begin{table}[t]
\centering
\small
\caption{Normalized cross-task NTK op-norm (3 seeds, mean$\pm$std).}
\label{tab:ntk}
\resizebox{\columnwidth}{!}{%
\begin{tabular}{lccc}
\toprule
Dataset & KAN head & MLP head & KAN/MLP ratio\\
\midrule
Split-MNIST       & 0.901$\pm$0.029 & 0.956$\pm$0.005 & 0.919\\
Permuted-MNIST    & 0.780$\pm$0.020 & 0.828$\pm$0.001 & 0.942\\
Rotation-MNIST    & 0.883$\pm$0.013 & 0.958$\pm$0.002 & 0.921\\
Split-CIFAR-10    & 0.772$\pm$0.022 & 0.860$\pm$0.004 & 0.898\\
Split-CIFAR-100   & 0.762$\pm$0.042 & 0.934$\pm$0.006 & \textbf{0.816}\\
\bottomrule
\end{tabular}
}
\end{table}

KAN's cross-task NTK norm is 6--18\% lower across all five benchmarks, with
the largest gap on CIFAR-100. By Theorem~\ref{thm:ntk-bound}, this directly
upper-bounds forgetting, matching the empirical pattern in Table~\ref{tab:main}.

\section{Discussion}

\paragraph{Why the KAN head outperforms an MLP head.}
The MLP+\bbEWC{} baseline directly answers the question of whether the KAN
head adds value beyond the backbone regularizer. Despite using the same
\textsc{SmallCNNBackbone} and the same \bbEWC{} on the backbone, an MLP
classification head still forgets 5--37$\times$ more than the KAN head
with per-knot anchoring (Table~\ref{tab:main}). The reason is structural:
an MLP head distributes task-specific class discrimination across all its
weights, so backbone stabilization alone cannot prevent head-level
interference. The KAN head, by contrast, concentrates each task's activation
in a compact set of knots; per-knot anchoring then protects exactly those
coefficients while leaving cold knots available for new classes. The head and
backbone regularizers are therefore genuinely complementary---neither is
redundant with the other.

\paragraph{Architecture as regularizer.}
Our results show that replacing an MLP head with a KAN head---and pairing it
with appropriate per-knot regularization---achieves a qualitatively different
forgetting profile from parameter-level methods. The key mechanism is spatial
separation: each knot is responsible for a compact input sub-range, so
task-specific knowledge is structurally localized rather than entangled.
This makes the anchor precise where it needs to be and non-intrusive where
it does not.

\paragraph{Accuracy--forgetting Pareto improvement.}
Unlike the classic stability--plasticity trade-off, where reduced forgetting
comes at the cost of accuracy, \KANCL{}+\bbEWC{} achieves the lowest
forgetting \emph{and} the highest accuracy on CIFAR-100/10T. We attribute
this to the surgical nature of per-knot anchoring: it protects only the
task-specific knots, leaving the rest of the parameter space fully plastic
for subsequent tasks. Standard EWC, by contrast, restricts all parameters
proportional to their global importance, limiting plasticity even in regions
that have not been task-relevant.

\paragraph{Limitations and future directions.}
(a)~\textbf{Backbone--head scaling.} The current KAN head (depth~2, $G{=}5$)
is designed to match \textsc{SmallCNNBackbone}'s 256-dim features. Applying
\KANCL{} to more powerful backbones requires proportionally scaling the KAN
head (larger grid $G$, greater depth) to maintain the capacity match that
underpins the locality guarantees. Deriving principled scaling rules---
e.g., relating $G$ to backbone feature dimensionality and inter-task
overlap---is an important direction for future work.
(b)~We tune $\lambda_b$ and $(\rho,\delta)$ on held-out validation splits;
a principled, data-free method for regularization strength selection would
increase practical applicability.
(c)~\textbf{Disjoint-support assumption.}
Theorems~\ref{thm:disjoint}--\ref{thm:rank} require disjoint marginal
supports across tasks on each input dimension, a condition most naturally
satisfied in domain-incremental settings (e.g., Permuted- or Rotation-MNIST).
Our main experiments use split-class CIFAR benchmarks, where this condition
holds only approximately in the learned feature space.
The density bound (Corollary~\ref{cor:density}) and empirical NTK measurements
(Table~\ref{tab:ntk}) provide complementary evidence that the structural
advantage transfers to split-class settings, but a formal theoretical
treatment of the approximate-disjointness regime---e.g., via distributional
overlap bounds on backbone features---remains an open problem and is
left for future work.
(d)~Extending per-knot regularization to adaptive-grid KANs
\cite{liu2024kan2}, where knot positions evolve during training, requires
tracking importance through grid refinement and is left for future work.
(e)~\textbf{NTK non-stationarity at task boundaries.}
Recent NTK-in-CL analyses have shown that the kernel undergoes significant
non-stationarity and reactivation at task boundaries in the feature-learning
regime, which can invalidate gradient-flow-based forgetting bounds.
Our Theorem~\ref{thm:feature-regime} establishes that the \emph{rank
advantage} of the KAN head over an MLP head persists throughout training, but
does not rule out that both kernels jointly become non-stationary. Whether
the absolute forgetting bound of Theorem~\ref{thm:ntk-bound} remains tight
across task switches---rather than merely the relative KAN vs.\ MLP
comparison---is an open theoretical question.

\section{Conclusion}

We introduced \KANCL{}+\bbEWC{}, a continual learning system that pairs a
per-knot importance regularizer on a KAN classification head with standard
EWC on a convolutional backbone. The KAN head's compact-support spline
parameterization enables spatially surgical anchoring: past knowledge is
protected precisely at the knots that were active for each task, while
inactive regions remain fully plastic. A novel NTK analysis shows that this
locality induces a structural rank deficit in the cross-task NTK that
survives feature learning, yielding a provably tighter forgetting bound than
parameter-level methods. Empirically, \KANCL{}+\bbEWC{} reduces forgetting
by 88--93\% over a head-only KAN baseline and matches or exceeds accuracy
on both benchmarks, establishing a Pareto improvement on the
stability--plasticity frontier. We hope this work encourages treating the
\emph{parameterization} of network components---not just the training
algorithm---as a primary design lever in continual learning.

\paragraph{Reproducibility.}
Code, configurations, and per-run logs are released at \url{...};
all benchmarks complete in approximately 50~min on a single A40 GPU.

\bibliographystyle{plain}

\clearpage
\appendix
\section*{Supplementary Material}
\setcounter{section}{0}
\setcounter{figure}{0}
\setcounter{table}{0}
\renewcommand{\thesection}{S\arabic{section}}
\renewcommand{\thesubsection}{S\arabic{section}.\arabic{subsection}}
\renewcommand{\thefigure}{S\arabic{section}.\arabic{figure}}
\renewcommand{\thetable}{S\arabic{section}.\arabic{table}}
\makeatletter
  \@addtoreset{figure}{section}
  \@addtoreset{table}{section}
\makeatother

The supplementary material provides additional experimental detail.
All main claims are supported by the figures and tables in the main text.

\section{Pure KAN Backbone Results}\label{app:pure-kan}

Table~\ref{tab:pure-kan} reports results for pure KAN backbones (no CNN)
on six benchmarks including MNIST-class tasks. These experiments demonstrate
that per-knot regularization improves forgetting even without a convolutional
backbone, though absolute accuracy is lower than the hybrid CNN+KAN system
due to the limited representational capacity of shallow KANs on visual data.

\begin{table}[h]
\centering
\small
\caption{Average accuracy ($\uparrow$) and forgetting ($\downarrow$) for pure
KAN backbones (Task-IL, 3 seeds). \KANCL{} delivers the lowest forgetting on
every benchmark even without a CNN backbone.}
\label{tab:pure-kan}
\resizebox{\columnwidth}{!}{%
\begin{tabular}{lcccccc}
\toprule
 & \multicolumn{2}{c}{Split-MNIST 5T} & \multicolumn{2}{c}{Perm-MNIST 10T} & \multicolumn{2}{c}{Rot-MNIST 10T}\\
\cmidrule(lr){2-3}\cmidrule(lr){4-5}\cmidrule(lr){6-7}
Method & ACC & FGT & ACC & FGT & ACC & FGT \\
\midrule
MLP finetune  & 0.760 & 0.293 & 0.322 & 0.716 & 0.456 & 0.576 \\
KAN finetune  & 0.695 & 0.365 & 0.425 & 0.569 & 0.403 & 0.609 \\
MLP+EWC       & 0.984 & 0.011 & 0.662 & 0.170 & 0.574 & 0.381 \\
KAN+EWC       & 0.926 & 0.075 & 0.555 & 0.130 & 0.504 & 0.232 \\
MLP+SI        & \textbf{0.987} & 0.005 & 0.565 & 0.156 & \textbf{0.570} & 0.296 \\
KAN+SI        & 0.968 & 0.023 & \textbf{0.578} & 0.133 & 0.524 & 0.233 \\
\textbf{\KANCL{}} & 0.924 & \textbf{0.006} & 0.524 & \textbf{0.073} & 0.520 & \textbf{0.134} \\
\bottomrule
\end{tabular}
}
\end{table}

\section{Per-Task Accuracy Trajectories}\label{app:rtraj}

\begin{figure}[H]\centering
\includegraphics[width=\columnwidth]{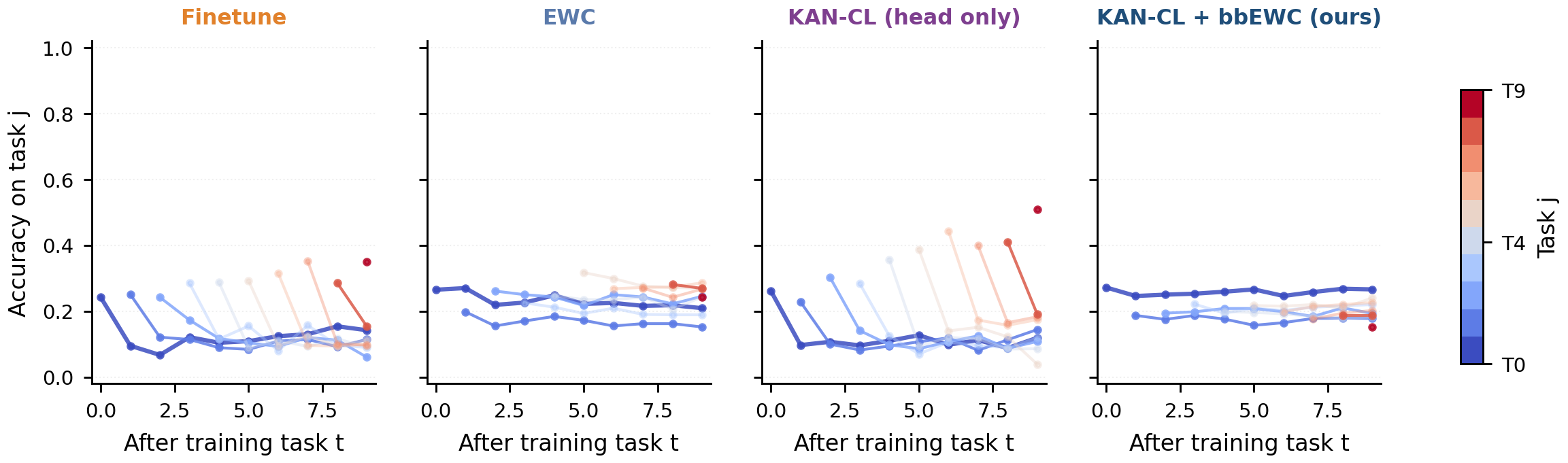}
\caption{\textbf{$R[t,j]$ trajectories on Split-CIFAR-100/10T.}
Each panel shows one method; each colored curve tracks accuracy on
evaluation task $j$ as training progresses through tasks.
\KANCL{}+\bbEWC{} (rightmost) shows the flattest trajectory profiles,
indicating minimal forgetting across all evaluation tasks.}
\label{fig:rtraj}
\end{figure}

\begin{figure}[H]\centering
\includegraphics[width=\columnwidth]{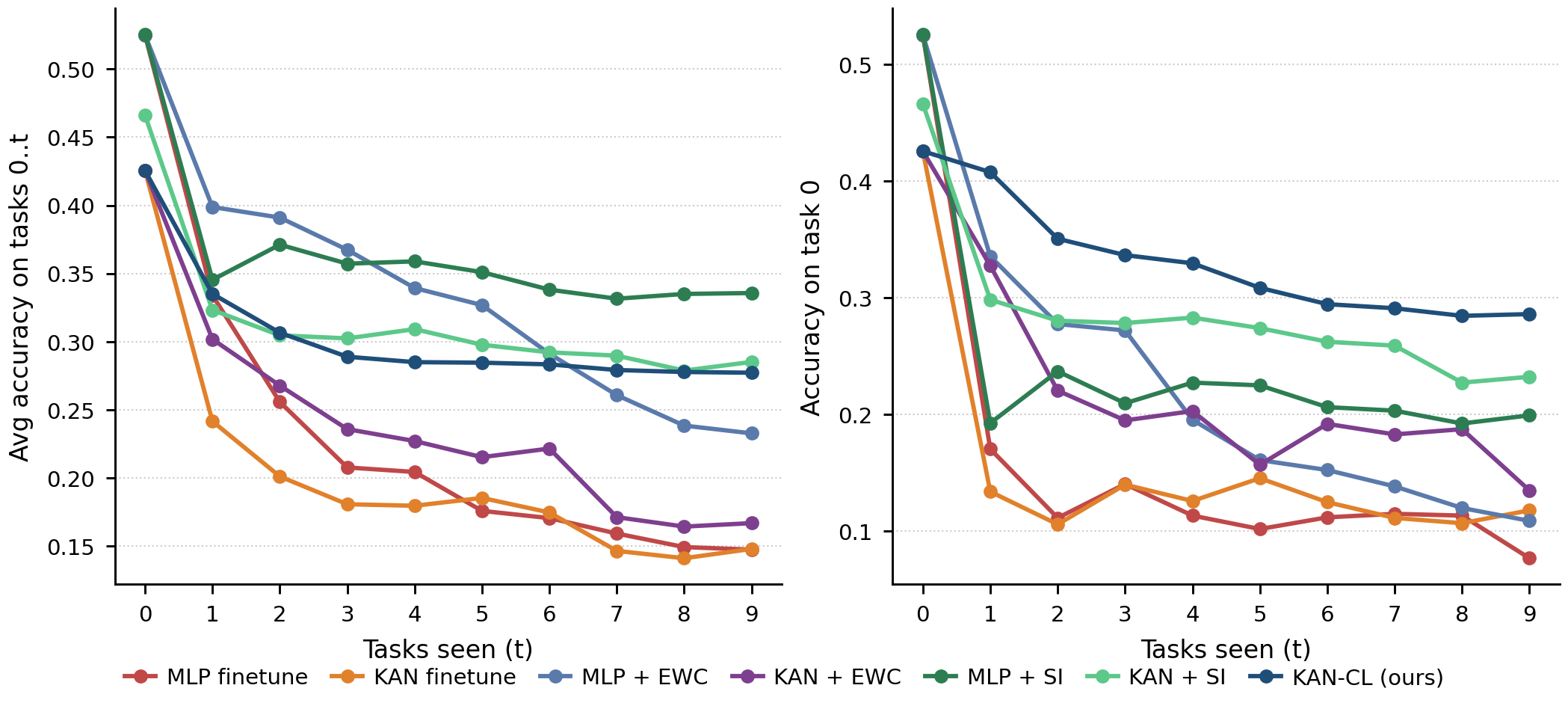}
\caption{\textbf{Mean accuracy over time on Split-CIFAR-100/10T.}
Left: mean accuracy over tasks $0..t$ after training task $t$;
right: accuracy on task~0 (first task) over training. \KANCL{}+\bbEWC{}
retains first-task performance throughout the sequence.}
\label{fig:per-task}
\end{figure}

\section{Domain-Incremental MNIST}\label{app:domain-il}

\begin{figure}[H]\centering
\includegraphics[width=\columnwidth]{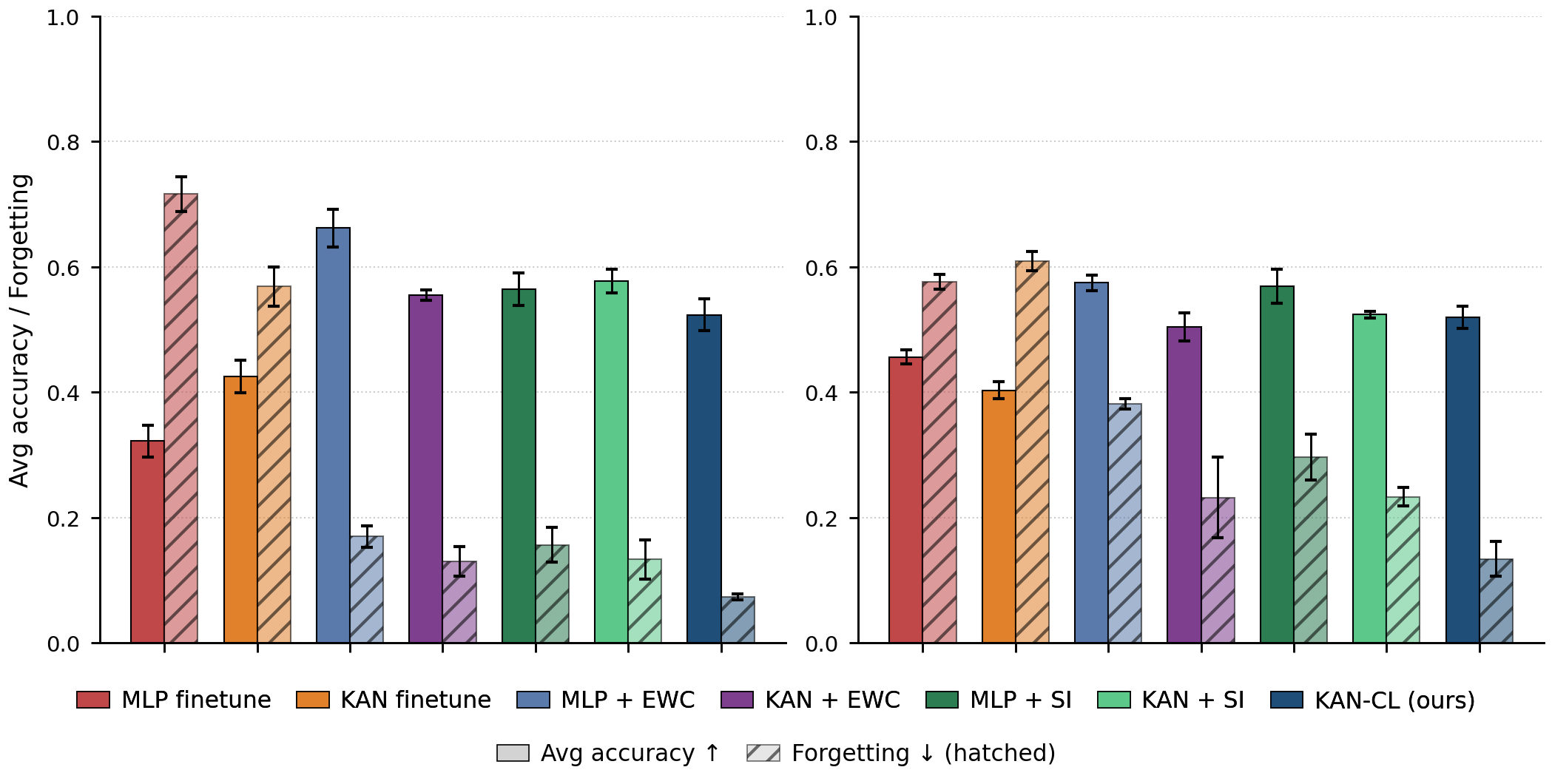}
\caption{\textbf{Domain-incremental MNIST (Task-IL, 10 tasks each).}
Left: Permuted-MNIST; right: Rotation-MNIST. Solid bars: average accuracy;
hatched bars: forgetting for all seven methods. Pure \KANCL{} achieves the
lowest forgetting while remaining competitive in accuracy.}
\label{fig:domain-il}
\end{figure}

\section{Stability--Plasticity Frontier}\label{app:pareto}

\begin{figure}[H]\centering
\includegraphics[width=\columnwidth]{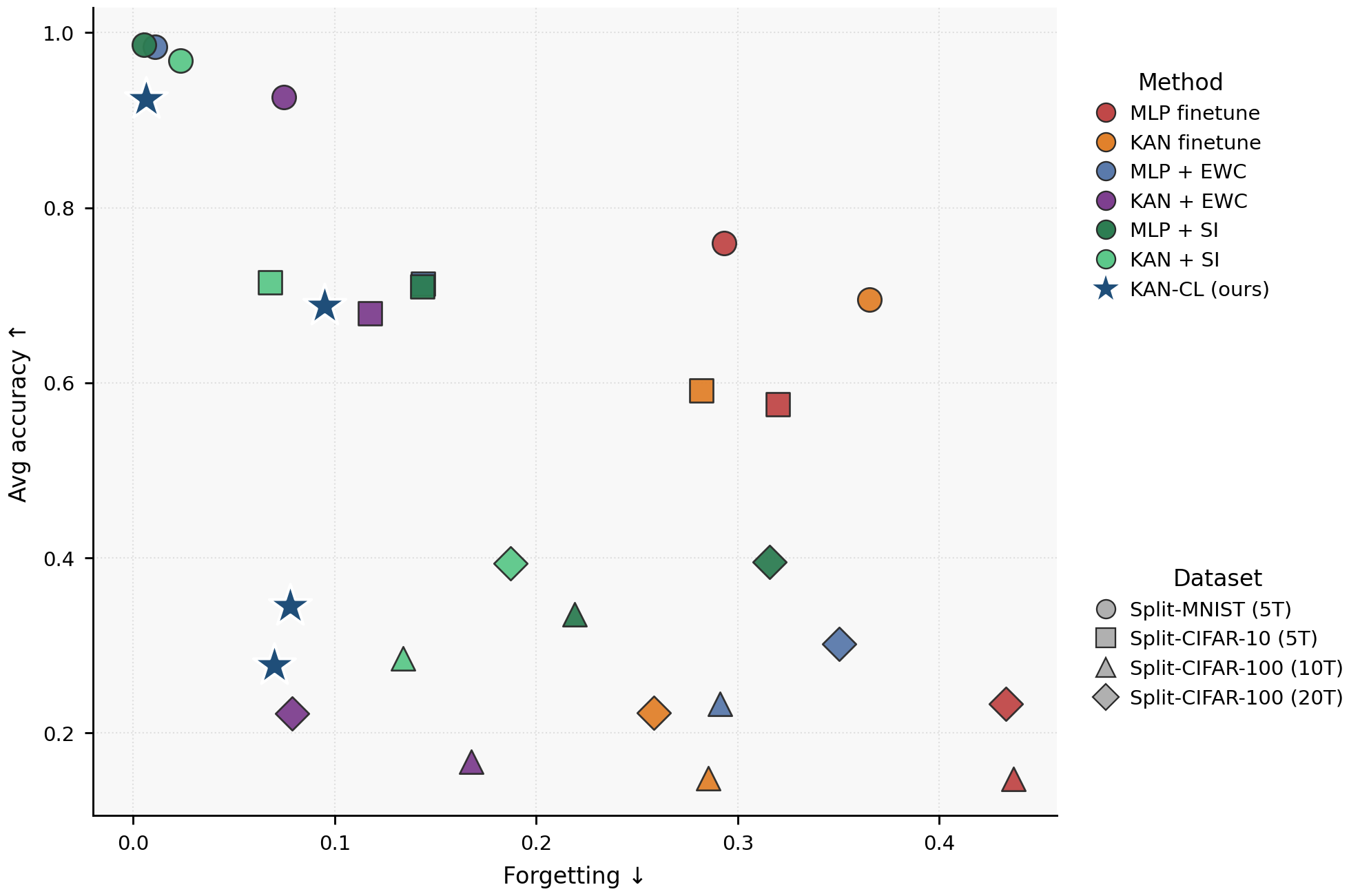}
\caption{\textbf{Stability--plasticity scatter across split-class benchmarks.}
Forgetting ($x$-axis) versus average accuracy ($y$-axis). Marker shape encodes
dataset; color encodes method. \KANCL{}+\bbEWC{} consistently occupies the
upper-left region.}
\label{fig:pareto}
\end{figure}

\section{Fisher Gradient Overlap}\label{app:fisher}

\begin{figure}[H]\centering
\includegraphics[width=\columnwidth]{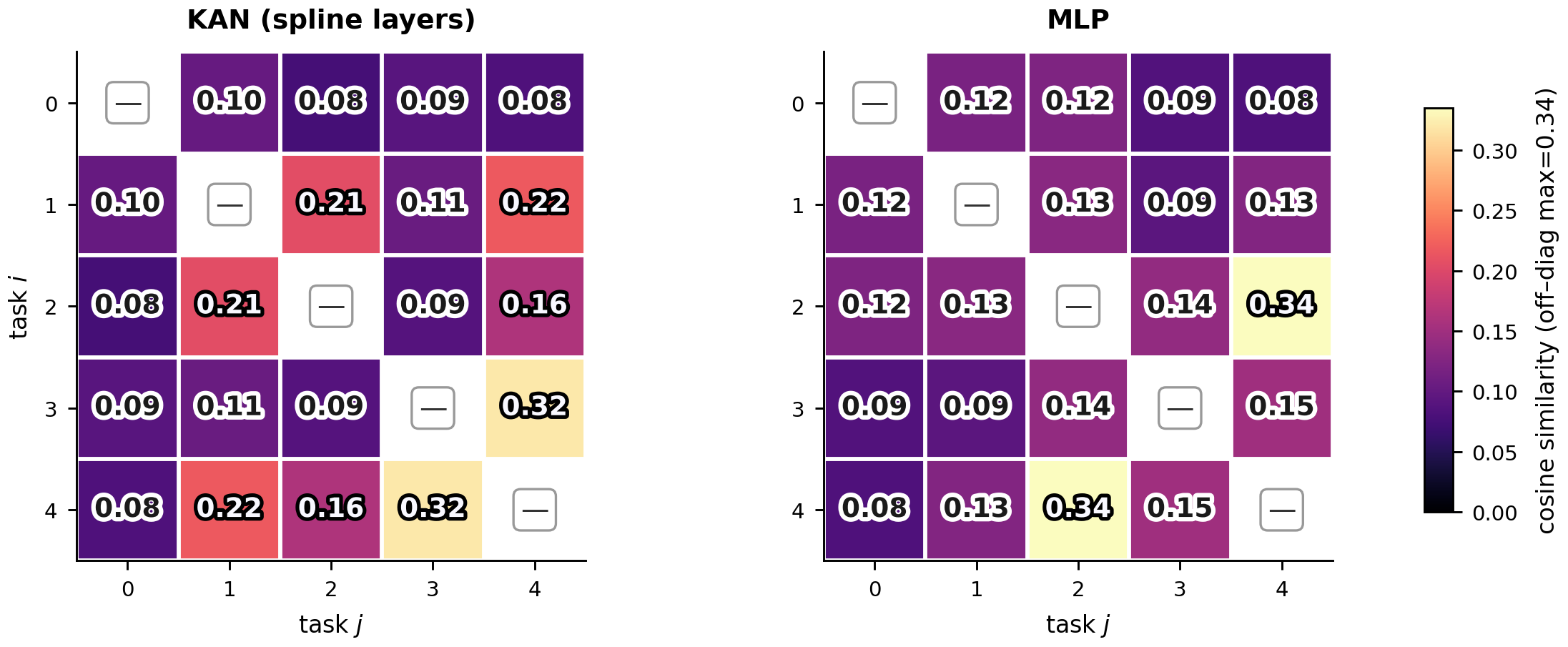}
\caption{\textbf{Task-pairwise Fisher overlap (Split-MNIST/5T).}
Cosine similarity of task-$i$ versus task-$j$ Fisher gradients (diagonal
omitted). Left: KAN spline layers; right: parameter-matched MLP. KAN exhibits
a sparser overlap structure, consistent with the per-knot disjointness of
Theorem~\ref{thm:disjoint}.}
\label{fig:fisher-ol}
\end{figure}

\section{Class-IL Buffer Size Sweep}\label{app:replay}

\begin{figure}[H]\centering
\includegraphics[width=\columnwidth]{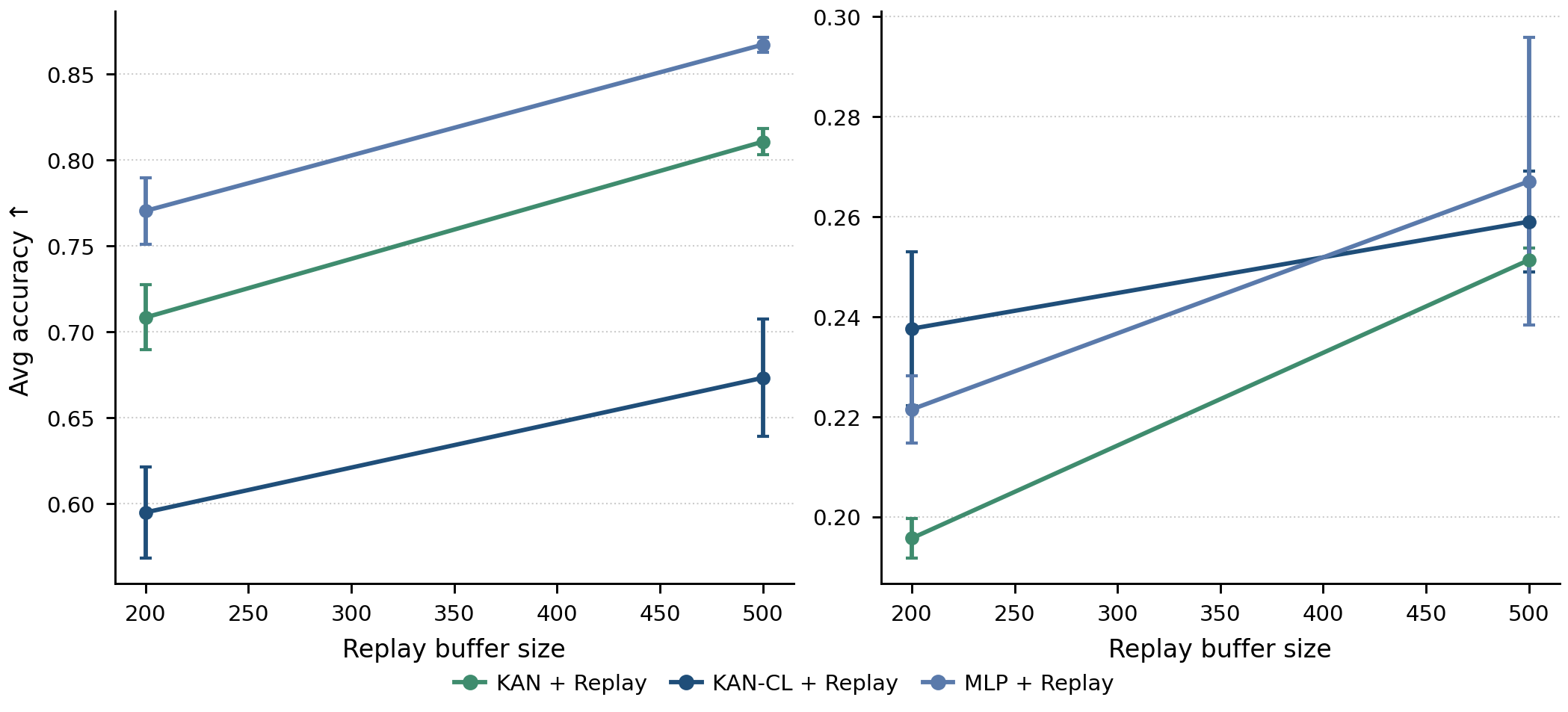}
\caption{\textbf{Class-IL buffer size sweep.}
Left: Split-MNIST; right: Split-CIFAR-10. Average accuracy vs.\ replay buffer
size for KAN+replay, MLP+replay, and \KANCL{}+replay.}
\label{fig:replay-classil}
\end{figure}

\section{Hyperparameter Sensitivity}\label{app:sensitivity}

\begin{figure}[H]\centering
\includegraphics[width=\columnwidth]{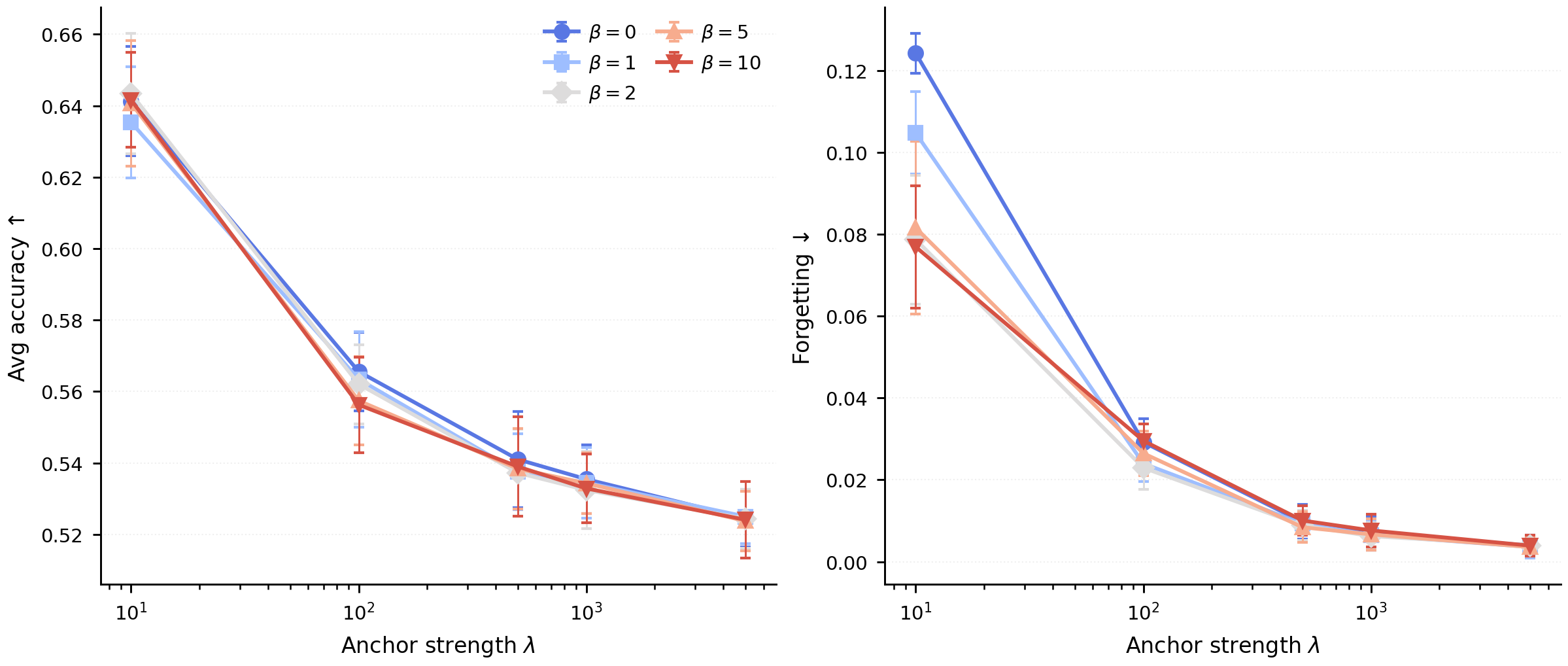}
\caption{\textbf{Sensitivity to $(\beta,\lambda)$ on Permuted-MNIST/5T.}
Left: accuracy vs.\ $\lambda$ (log scale); right: forgetting vs.\ $\lambda$.
$\lambda{=}500$ is a robust operating point across all seeds and $\beta$ values.}
\label{fig:sens}
\end{figure}

\section{Grid Size Sweep}\label{app:grid}

\begin{figure}[H]\centering
\includegraphics[width=\columnwidth]{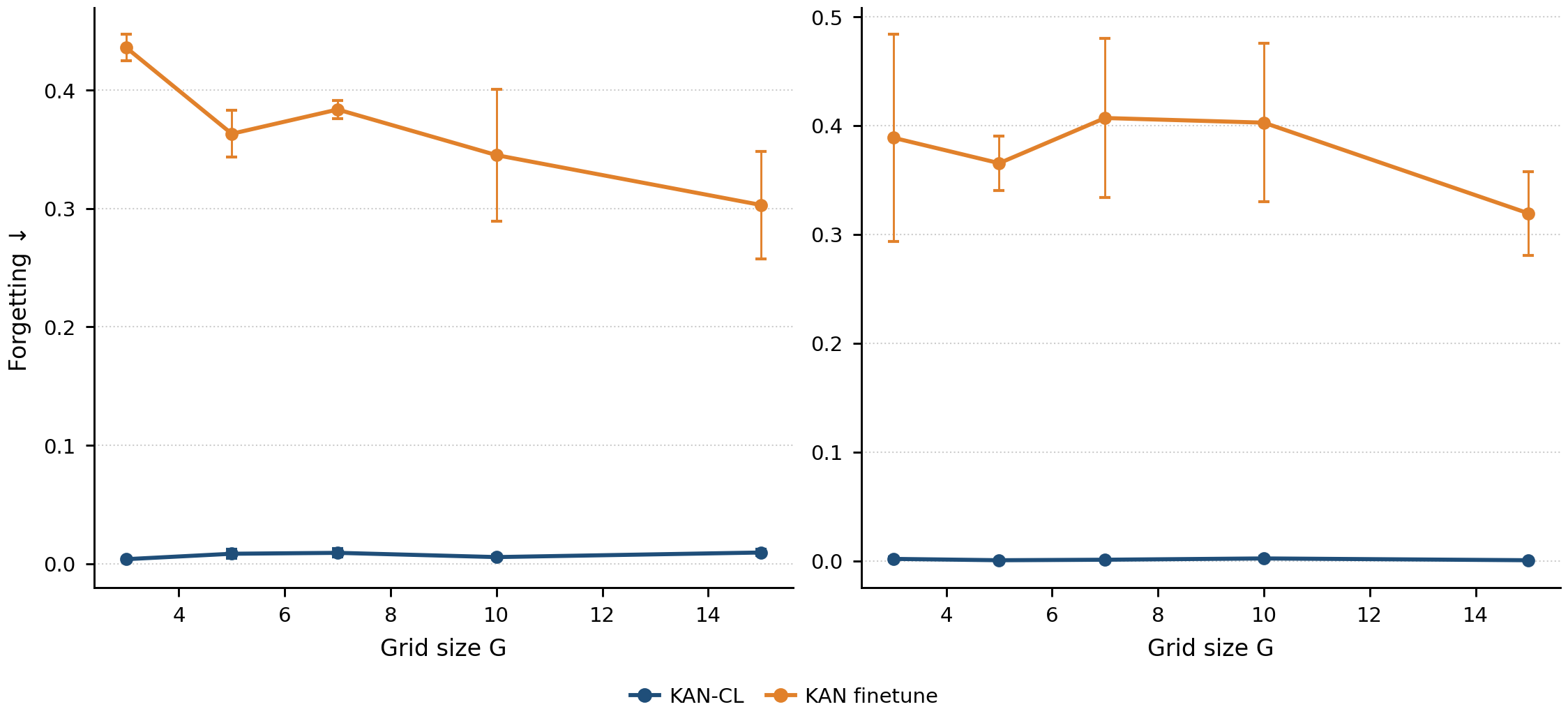}
\caption{\textbf{Grid-size sweep ($G\in\{3,5,7,10,15\}$).}
On Permuted-MNIST (domain-shift), forgetting decreases with $G$ until
variance dominates at $G^*\!\approx\!7{-}10$. On Split-MNIST (shared
marginals) forgetting is flat, matching our theoretical prediction.}
\label{fig:grid}
\end{figure}

\end{document}